%% file: main.tex
\documentclass[conference]{IEEEtran}
\usepackage{times}

\usepackage[numbers]{natbib}
\usepackage{multicol}
\usepackage[bookmarks=true]{hyperref}

\usepackage{graphicx}
\usepackage{amsmath,amsfonts}
\usepackage{array}
\usepackage{textcomp}
\usepackage{stfloats}
\usepackage{url}
\usepackage{verbatim}
\usepackage{graphicx}
\usepackage{balance}

\usepackage{algorithm}
\usepackage{algpseudocode}
\algrenewcommand\algorithmicrequire{\textbf{Input:}}
\algrenewcommand\algorithmicensure{\textbf{Output:}}

\usepackage{tikz}
\usepackage{pgfplots}
\usepackage{xcolor}
\pgfplotsset{compat=1.18}
\usetikzlibrary{positioning}
\usetikzlibrary{decorations.pathreplacing}
\usetikzlibrary{calc}
\usetikzlibrary{positioning,shapes,shadows,arrows}

\usepackage{pifont}
\usepackage{multirow}
\usepackage{dsfont}
\usepackage{subcaption}
\usepackage{bm}
\usepackage{caption}
\usepackage{bbm}
\usepackage{gensymb}
\usepackage{amsthm}
\usepackage{amsmath, amssymb, mathtools}

\usepackage{array} 
\newcolumntype{P}[1]{>{\centering\arraybackslash}p{#1}}

\pgfplotsset{
every axis/.append style={
  axis line style={->}, 
  legend style={font=\scriptsize},
  label style={font=\scriptsize},
  title style={font=\scriptsize},
  tick label style={font=\scriptsize},
  }
}

\usepackage[switch]{lineno}

\newtheorem{theorem}{Theorem}[section]
\newtheorem{definition}{Definition}[section]
\newtheorem{proposition}{Proposition}[section]
\newtheorem{lemma}{Lemma}[section]
\newtheorem{assumption}{Assumption}[section]

\begin{document}

\title{
Zero-Shot Sim-to-Real Robot Learning: A Dexterous Manipulation Study on Reactive Catching
}

\makeatletter
\def\authorrefmark#1{\textsuperscript{#1}}
\makeatother

\author{\authorblockN{Kejia Ren\authorrefmark{1},
Gaotian Wang\authorrefmark{1},
Andrew S. Morgan\authorrefmark{2} and
Kaiyu Hang\authorrefmark{1}}
\authorblockA{\authorrefmark{1}Department of Computer Science, Rice University}
\authorblockA{\authorrefmark{2}Robotics and AI Institute}}

\maketitle

\input{includes/abstract}

\IEEEpeerreviewmaketitle

\input{includes/intro}
\input{includes/relatedwork}
\input{includes/method}

\input{includes/problem}

\input{includes/expsim}
\input{includes/expreal}
\input{includes/conclusion}

\section*{Acknowledgments}

This work was supported by the U.S. National Science Foundation (NSF) under grant FRR-2240040.

\bibliographystyle{plainnat}
\bibliography{refs}

\input{includes/appendix_alg}
\input{includes/appendix}

\end{document}

%% file: includes/abstract.tex
\begin{abstract}
Dexterous manipulation is physics-intensive and highly sensitive to modeling errors and perception noise, making sim-to-real transfer prohibitively challenging.
Domain randomization (DR) is commonly used to improve the robustness of learned policies for such tasks,
but conventional DR randomizes one instance per episode,
offering very limited exposure to the variability of real-world dynamics.
To this end,
we propose Domain-Randomized Instance Set (DRIS), which represents and propagates a set of randomized instances simultaneously,
providing richer approximation of uncertain dynamics and enabling policies to learn actions that account for multiple possible outcomes.
Supported by theoretical analysis,
we show that DRIS yields more robust policies and alleviates the need for real-world fine-tuning,
even with a modest number of instances (e.g., 10).
We demonstrate this on a challenging reactive catching task.
Unlike traditional catching setups that use end-effectors designed to mechanically stabilize the object (e.g., curved or enclosing surfaces), 
our system uses a flat plate that offers no passive stabilization, 
making the task highly sensitive to noise and requiring rapid reactive motions.
The learned policies exhibit strong robustness to uncertainties and achieve reliable zero-shot sim-to-real transfer.
\end{abstract}

%% file: includes/intro.tex
\section{Introduction}
\label{sec:intro}

Dexterous manipulation refers to the robot's ability to skillfully manipulate objects through coordinated and intentional contact interactions,
e.g., in-hand manipulation~\cite{andrychowicz2020learning, chen2022system}, 
tool use~\cite{liu2023learning},
pushing~\cite{dengler2022learning}, 
catching~\cite{kim2014catching}, etc.
Such tasks often require well-planned contact, timing, and motion to achieve stable and adaptive control over object behavior.
Moreover, they often introduce a substantial sim-to-real gap,
as the success of such tasks can be highly sensitive to inaccuracies in estimating the underlying dynamics and physical properties such as contact geometry, frictions, object inertia, and compliance.
Even small discrepancies between simulation and the real world can lead to large deviations in manipulation behavior and ultimately cause task failure.

Traditional model-based frameworks depend on known system parameters to achieve accurate prediction and control~\cite{chavan2015prehensile}.
However, in practical manipulation scenarios, 
properties such as contact stiffness, friction coefficients, and object geometries can greatly vary between environments and are difficult to precisely identify.
Learning-based methods address these limitations by training policies with extensive domain randomization or data augmentation over uncertain physical parameters in simulation~\cite{peng2018sim}, improving the robustness to these uncertainties when deployed in the real world.
Yet, tasks involving very dynamic, contact-rich interactions remain difficult to learn, as their success hinges on precise coordination of contact timing and force modulation under uncertainty.

\begin{figure}[t]
    \centering
    \includegraphics[width=\columnwidth]{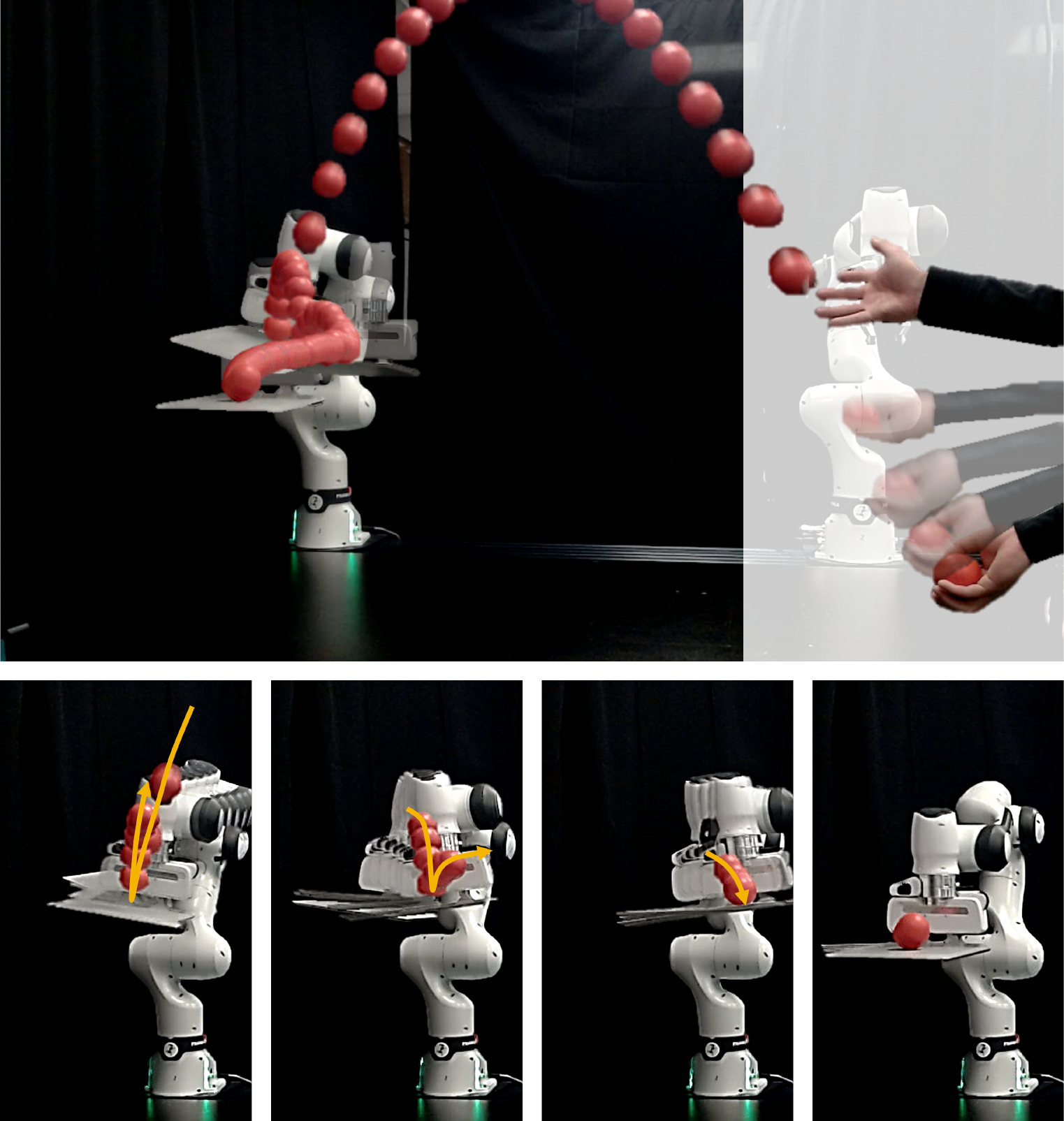}
    \caption{
        Robot reactive catching of a rubber ball using a flat plate.
        The second row shows sequential impacts during the catching motion, where the final frame depicts the ball successfully stabilized on the plate;
        the yellow arrows indicate the ball's trajectory at the moments of impact.
    }
    \label{fig:first_pic}
\end{figure}

Domain randomization is traditionally performed by perturbing relevant physical parameters during data collection,
e.g., sampling one object instance with randomized properties in each rollout and training policy on the collected rollouts.
While this exposes the policy to diverse experiences,
it lacks a structured mechanism for the policy to reason about how uncertainties affect possible manipulation outcomes.
In contrast, if such uncertainties are explicitly incorporated into the task representation itself,
the policy can learn to generate actions that compensate for them and plan with respect to a distribution of possible outcomes,
thereby mitigating the sim-to-real gap introduced by these uncertainties.

To this end,
rather than training a policy on individual manipulation instances with randomly sampled physical parameters (as is done in conventional domain randomization), 
we propose \emph{Domain-Randomized Instance Set (DRIS)} that explicitly models a distribution of possible physical variations encountered during manipulation.
Our proposed DRIS contains multiple parallel instances (e.g., objects with different physical properties), all of which are propagated simultaneously.
Their state evolutions under a shared action jointly inform the policy update.
As a result, the learned policy becomes inherently tolerant to the uncertainties encoded in DRIS, leading to improved robustness against the sim-to-real gap (e.g., enabling reliable zero-shot transfer).

To demonstrate the effectiveness of the proposed learning strategy,
we consider a highly challenging reactive catching task illustrated in Fig.~\ref{fig:first_pic}.
Unlike traditional catching setups that employ curved or concave end-effectors such as cups~\cite{dong2020catch}, nets~\cite{gold2022catching}, or articulated hands~\cite{bauml2010kinematically}, to mechanically stabilize the object after impact,
our setup uses a flat and low-friction plate rigidly attached to the robot's end-effector.
As a result,
the system becomes highly sensitive to physical uncertainties or perception errors,
where even small errors in contact timing, plate orientation, or unwanted lateral forces can cause the ball to bounce off or roll away from the plate.

In this work, we propose a robust learning strategy for dynamic manipulation,
demonstrated through a challenging robot reactive catching case study.
Our key technical contributions are summarized as follows:
\begin{enumerate}

    \item We propose Domain-Randomized Instance Set (DRIS) as a representation to capture the uncertainty in system dynamics induced by physical parameter variations;

    \item We propose a DRIS-based data collection and policy learning paradigm that enables simultaneous handling of multiple task instances, yielding more stable and robust policy learning and improved zero-shot sim-to-real transfer than the traditional paradigm, supported by theoretical analysis;

    \item We instantiate the framework on a challenging reactive catching problem with a tailored task representation and policy architecture, and validate our framework through both simulation and real-world experiments.
\end{enumerate}

%% file: includes/relatedwork.tex
\section{Related Work}
\label{sec:relatedwork}

\emph{Reinforcement Learning for Dexterous Manipulation.} 
Unlike traditional analysis-based approaches sensitive to modeling errors, 
Reinforcement Learning (RL) learns directly from data, 
making it advantageous for the complex contact dynamics of dexterous manipulation. 
RL has successfully solved tasks including in-hand manipulation~\cite{andrychowicz2020learning, chen2022system}, catching~\cite{abeyruwan2023agile}, juggling~\cite{ploeger2021high}, deformable object manipulation~\cite{chi2024iterative}, and regrasping~\cite{petrenko2023dexpbt}. 
However, these data-intensive methods typically require large-scale high-fidelity simulation or expert demonstrations to facilitate real-world transfer.

\emph{Sim-to-Real Transfer in Robotics.} 
Bridging the sim-to-real gap caused by sensing and contact discrepancies is a central challenge. Beyond Domain Randomization (DR), researchers utilize high-fidelity simulators~\cite{todorov2012mujoco, sanchez2020learning, makoviychuk2021isaac} or explicitly bridge the gap via adaptive system identification~\cite{yu2017preparing, jatavallabhula2021gradsim, sobanbabu2025sampling} and hybrid dynamics learning~\cite{ajay2018augmenting, jiang2021simgan}. 
While these techniques enable zero-shot transfer~\cite{tobin2017domain}, reliable deployment for dynamic manipulation remains difficult due to the discontinuous, stochastic nature of high-speed contact dynamics.

\emph{Domain Randomization in Robot Learning.} 
Since real-world data collection is costly, 
DR is essential for reducing the simulation gap in contact-rich tasks~\cite{muratore2022robot, huber2024domain}. 
Randomization strategies have evolved from visual and geometric properties~\cite{tobin2017domain, tobin2018domain} to kinematic~\cite{exarchos2021policy} and dynamic parameters~\cite{peng2018sim}. 
While active DR optimizes sampling distributions to improve transfer~\cite{ramos2019bayessim, chebotar2019closing, mehta2020active, muratore2022neural, tiboni2023dropo}, 
it typically requires real-world rollouts. 
Although some established works strategically employ DR—either through curriculum-based scaling~\cite{akkaya2019solving} or entropy maximization~\cite{tiboni2024domain}—to enable zero-shot ransfer,
our approach differs by training over the joint evolution of multiple randomized instances simultaneously to achieve zero-shot transfer for challenging tasks such as catching.

\emph{Representing Uncertainty in States and Dynamics.} 
Robotics has extensively studied uncertainty representation. 
Classical methods employ belief-space planning via POMDPs~\cite{porta2006point, bry2011rapidly, jain2018efficient}. 
Learning-based approaches incorporate uncertainty through probabilistic latent dynamics~\cite{hafner2019dream, hafner2019learning}, history-based parameter inference~\cite{nagabandi2018learning, sheckells2019using}, or ensembles for model-based RL~\cite{chua2018deep}.
While some history-based approaches require real-world data in the loop~\cite{sheckells2019using}, our approach is purely zero-shot.
Furthermore, while others can perform adaptation online~\cite{nagabandi2018learning}, 
they are not suitable enough for highly dynamic tasks such as catching, where the robot must react within milliseconds.
Complementary to these, recent work learns belief embeddings to explicitly encode uncertainty for policy conditioning in partially observable settings~\cite{wang2023learning}.
While such belief-representation methods learn a latent state space to implicitly represent uncertainty, 
our approach more explicitly and intuitively captures the uncertainty through a multi-instance representation during high-speed contacts.

\emph{Dexterous Robot Catching.} 
Catching typically requires fast prediction and reactive motion. 
Prior work evolved from optimization-based planning~\cite{bauml2010kinematically} and probabilistic models for uneven objects~\cite{kim2014catching} to deep learning-based policies on mobile manipulators~\cite{zhang2025catch}. 
However, most rely on mechanically assisting end-effectors like cups~\cite{namiki2014ball, dong2020catch, abeyruwan2023agile}, nets~\cite{gold2022catching}, or articulated hands~\cite{salehian2016dynamical, huang2023dynamic}. 
In contrast, we use a flat plate without mechanical stabilization. 
Unlike similar flat-surface work~\cite{batz2010dynamic} limited to specific objects and a number of rebounds, our approach handles diverse conditions without such restrictions.

%% file: includes/method.tex
\section{Learning Manipulation Policy with Domain-Randomized Instance Set}
\label{sec:method}

\subsection{Dexterous Manipulation as Policy Learning}
We consider the dexterous manipulation problem in which a robotic manipulator interacts with an object through contact-rich dynamics.
We define the domain space $\mathcal{C}$ as the set of all relevant physical parameters that influence the system dynamics, such as robot characteristics (e.g., stiffness), object geometry and physical properties (e.g., friction).
Each $\bm{c} \in \mathcal{C}$ represents a specific domain instance 
that induces a particular realization of the system dynamics.
These parameters are often difficult to measure accurately, but they can substantially affect the object's motion during manipulation.

The problem is formulated as a discrete-time Markov Decision Process (MDP).
At time step $t$, the system observes a state $\bm{s}_t \in \mathcal{S}$, executes an action $\bm{a}_t = \pi(\bm{s}_t) \in \mathcal{A}$ according to a robot policy $\pi(\cdot)$, 
and transitions to the next state $\bm{s}_{t+1} = f \left(\bm{s}_t, \, \bm{a}_t, \, \bm{c} \right)$, where the dynamics $f$ depend on the domain parameter $\bm{c}$.
Different values of $\bm{c}$ (e.g., object properties) lead to distinct motion behaviors even under the same robot action.
At each step, the system receives a scalar reward $r_t = r\left(\bm{s}_t, \, \bm{a}_t \right) \in \mathbb{R}$.

The objective is to find an optimal policy $\pi_{\bm{\theta}}: \mathcal{S} \mapsto \mathcal{A}$, that maximizes the expected return (i.e., cumulative reward), where $\gamma \in [0, 1)$ is a discount factor:
\begin{equation}
\begin{aligned}
    \max_{\pi_{\bm{\theta}}} \quad & \mathbb{E}_{\pi_{\bm{\theta}}}\!\left[ \sum_{t=0}^{T-1} \gamma^t \, r\left(\bm{s}_t, \, \bm{a}_t \right) \right]\\
	\text{s.t.}
    \quad & \bm{a}_t = \pi_{\bm{\theta}}\left(\bm{s}_t \right) \in \mathcal{A}, \quad \forall t = 0, \cdots, T - 1, \\
    \quad & \bm{s}_{t+1} = f \left(\bm{s}_t, \, \bm{a}_t, \, \bm{c} \right) \in \mathcal{S}, \quad \bm{c} \in \mathcal{C}
\end{aligned}
\end{equation}
In deep reinforcement learning (DRL),
the policy $\pi_{\bm{\theta}}$ is modeled by a neural network parameterized by $\bm{\theta}$, 
and optimized via gradient-based update using data collected from robot-object interactions.

\subsection{Domain-Randomized Instance Set (DRIS)}
\label{sec:dris}

In practice, physical parameters $\bm{c}$ are difficult to identify accurately, introducing uncertainty into manipulation dynamics. 
To improve robustness, most approaches model the dynamics stochastically as a probability $P\left(\bm{s}_{t+1} | \bm{s}_t, \, \bm{a}_t\right)$ and apply domain randomization to approximately capture this stochasticity,
typically by sampling a single parameter instance per episode in standard model-free RL~\cite{peng2018sim}. 
More recent methods further adapt the sampling distribution using real-world data~\cite{ramos2019bayessim,tiboni2023dropo}. 
However, these strategies still predict only individual next states, and do not explicitly model how the distribution of possible states evolves over time.

Inspired by prior work on set-based representations~\cite{wang2025caging},
rather than propagating a single state $\bm{s}_t$,
we instead propagate a set of possible states $\mathcal{S}_t \subset \mathcal{S}$ simultaneously under a shared robot action,
reflecting the effects of physical variations across different domain parameters.
Specifically, we construct a discrete set $\hat{\mathcal{C}} \subset \mathcal{C}$ of $N$ manipulation instances (indexed by $i$) by uniformly sampling from the domain space:
\begin{equation}
    \hat{\mathcal{C}} = \left\{ \bm{c}^{(i)} \right\}_{i=1}^N, \,\, \bm{c}^{(i)} \sim \mathcal{U}\left( \mathcal{C}\right)
\end{equation}
We then define the \emph{Domain-Randomized Instance Set (DRIS)} as the collection of state-parameter pairs associated with these sampled instances:
\begin{equation}
    \mathcal{D}_t = \left\{ \left(\bm{s}^{(i)}_t, \bm{c}^{(i)} \right) \,\middle|\, \bm{c}^{(i)} \in \hat{\mathcal{C}} \right\}_{i=1}^N \subset \mathcal{S} \times \mathcal{C} 
\end{equation}
where $\bm{s}_t^{(i)}$ in each element denotes the state of the $i$-th manipulation instance, as it evolves at time $t$ under its corresponding domain parameter $\bm{c}^{(i)}$.
For clarity, we also extract the state of all instances in the DRIS to form another set (i.e., the projection of $\mathcal{D}_t$ onto the state space $\mathcal{S}$):
\begin{equation}
    \mathcal{S}_t = \text{proj}_{\mathcal{S}} \left(\mathcal{D}_t\right) = \left\{ \bm{s}_t^{(i)} \,\middle|\, \left( \bm{s}_t^{(i)}, \, \bm{c}^{(i)} \right) \in \mathcal{D}_t \right\} \subset \mathcal{S}
\end{equation}
 We extend the system dynamics by introducing a new function $\mathcal{F}$ to evolve the entire instance set:
\begin{equation}
\begin{aligned}
    \mathcal{D}_{t+1} &= \left\{ \left( \bm{s}_{t+1}^{(i)}, \, \bm{c}^{(i)} \right) \right\}_{i=1}^N = \mathcal{F} \left(\mathcal{D}_t, \, \bm{a}_t \right)\\
    & = \left\{ \left(f\left(\bm{s}^{(i)}_t, \, \bm{a}_t, \, \bm{c}^{(i)} \right), \, \bm{c}^{(i)}\right) \, \middle| \, \left(\bm{s}^{(i)}_t, \, \bm{c}^{(i)} \right) \in \mathcal{D}_t\right\}
\end{aligned}
\label{eq:dris_prop}
\end{equation}

\begin{figure*}[t]
    \centering
     \begin{tikzpicture}
        \node[anchor=south west,inner sep=0] at (0,0){\includegraphics[width=\linewidth]{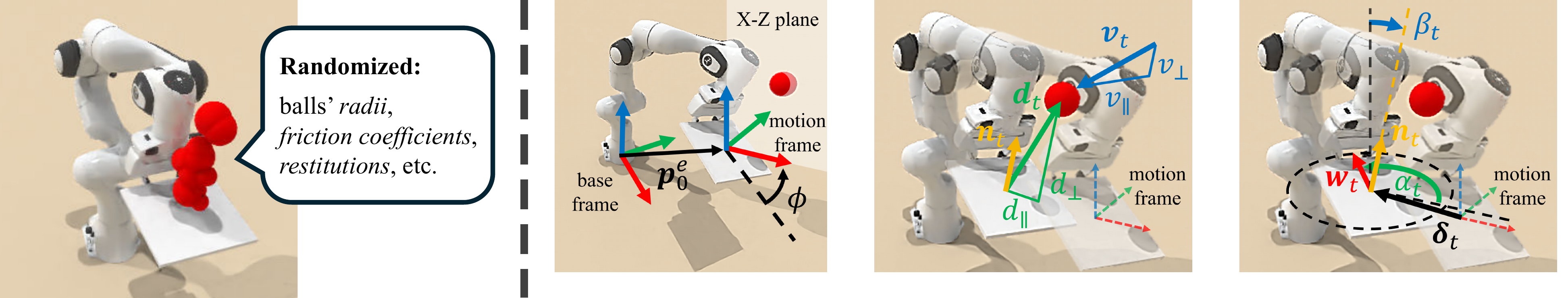}};
        \node[anchor=west, align=left] at (7.8, 0.1) {(a)};
        \node[anchor=west, align=left] at (11.8, 0.1) {(b)};
        \node[anchor=west, align=left] at (16.0, 0.1) {(c)};
    \end{tikzpicture}
    \caption{
        The reactive catching problem instantiation.
        \emph{Left:} Randomized parameters in DRIS.
        \emph{Right:}
        Components used to represent the problem:
        (a) The motion frame is defined at the plate's initial position $\bm{p}^e_0$ and rotated about the Z-axis by angle $\phi$ so that its X-Z plane passes through the incoming ball.
        (b)
        The robot has moved from the initial ghosted to the current opaque configuration.
        The system state consists of the ball's relative displacement $\bm{d}_t$ (green arrow) to the plate and its linear velocity $\bm{v}_t$ (blue arrow), both expressed in the motion frame;
        these two quantities are decomposed with respect to the plate's normal vector $\bm{n}_t$ (yellow arrow) to define the reward.
        (c) The action comprises a translation of the plate $\bm{\delta}_t$ (black arrow) and a tilting configuration: angle $\alpha_t$ (green) specifies a horizontal axis $\bm{w}_t$ (red arrow), about which the plate is rotated by angle $\beta_t$ (blue).
    }
    \label{fig:task}
\end{figure*}

\subsection{Zero-Shot Sim-to-Real Learning}
\label{sec:s2r}

Since DRIS defined in Sec.~\ref{sec:dris} inherently embeds uncertainties into its representation, 
integrating DRIS into policy learning in simulation enables more robust sim-to-real transfer without fine-tuning.
Because the DRIS size $N$ may vary and because the robot will manipulate only a single instance (although trained with multiple instances in DRIS) in real deployment, 
the policy must take an input whose dimensionality is independent of $N$.
Thereafter,
we require a compact and size-agnostic representation of DRIS.
To this end,
we employ an encoder $\psi(\cdot)$ that maps the DRIS state (i.e., $\mathcal{S}_t)$ to a fixed-dimensional latent vector $\bm{z}_t = \psi \left(\mathcal{S}_t \right) \in \mathbb{R}^{d_z}$,
where $d_z$ is the latent dimension and remains constant regardless of the size $N$ of $\mathcal{D}_t$.
A policy that directly operates in this latent space is then learned by solving:
\begin{equation}
\begin{aligned}
    \max_{\pi_{\bm{\theta}}} \quad &
    \mathbb{E}_{\pi_{\bm{\theta}}}\!\left[
        \sum_{t=0}^{T-1} \gamma^t \cdot
        \frac{1}{N}\sum_{\bm{s}^{(i)}_t \in \mathcal{S}_t}
        r(\bm{s}^{(i)}_t, \, \bm{a}_t)
    \right]\\
    \text{s.t.}
    \quad & \bm{a}_t = \pi_{\bm{\theta}}\left(\bm{z}_t \right) \in \mathcal{A}, \quad \, \,   \forall t = 0, \cdots, T - 1, \\
    \quad & \bm{z}_t = \psi\left( \mathcal{S}_t\right) \in \mathbb{R}^{d_z}, \quad \forall t = 0, \cdots, T - 1, \\
    \quad & \mathcal{S}_t = \text{proj}_{\mathcal{S}} \left( \mathcal{D}_t \right), \quad \, \, \, \, \, \forall t = 0, \cdots, T - 1, \\
    \quad & \mathcal{D}_{t+1} = \mathcal{F}\left(\mathcal{D}_t, \, \bm{a}_t \right) \subset \mathcal{S} \times \mathcal{C}
\end{aligned}
\end{equation}
Compared to conventional DR, we summarize the benefits of DRIS in Theorem~\ref{theorem:dris}:

\begin{theorem}[DRIS improves stability and robustness]
\label{theorem:dris}
    Relative to conventional domain randomization, DRIS is an exact particle approximation for the system belief propagation.
    Consequently, DRIS yields a more stable optimization, more robust policies, and improved sim-to-real generalization.
    (See \hyperref[appendix:math]{Appendix B} for detailed analysis and proofs.)
\end{theorem}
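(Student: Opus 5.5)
We prove Theorem~\ref{theorem:dris} by first formalizing each informal claim as a precise statement about an unknown-parameter problem, and then proving each one in turn.

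\emph{Formalization.} We treat the unknown domain parameter $\bm{c}$ as a latent variable with prior $p(\bm{c}) = \mathcal{U}(\mathcal{C})$. The belief over the augmented state $(\bm{s}, \bm{c})$ then evolves deterministically given the action, because $f$ is deterministic once $\bm{c}$ is fixed. Under a shared action sequence $\bm{a}_{0:t-1}$ and a common initial state $\bm{s}_0$, the exact belief is the pushforward
\begin{equation*}
b_t = \left(\Phi_t\right)_{\#} p,
\end{equation*}
where $\Phi_t(\bm{c}) = \big(f_{\bm{a}_{t-1}}\circ\cdots\circ f_{\bm{a}_0}(\bm{s}_0,\bm{c}),\,\bm{c}\big)$. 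Any DRIS is initialized with the empirical measure $\hat{p}_N = \frac{1}{N}\sum_i \delta_{\bm{c}^{(i)}}$. By \eqref{eq:dris_prop}, propagating every particle through $f$ gives
\begin{equation*}
\frac{1}{N}\sum_i \delta_{(\bm{s}^{(i)}_t,\bm{c}^{(i)})} = (\Phi_t)_{\#}\hat{p}_N .
\end{equation*}
This is the sense in which the particle approximation is ``exact'': propagation introduces no error beyond the initial sampling. Because $\bm{c}$ is static and no observation update occurs, no resampling and no weight degeneracy arise. By the law of large numbers, $(\Phi_t)_{\#}\hat{p}_N \to b_t$ weakly. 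Conventional DR corresponds to $N=1$: it propagates a single draw, which is a one-particle estimate of $b_t$.

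\emph{Stability.} The DRIS objective at each step is $\frac{1}{N}\sum_i r(\bm{s}^{(i)}_t,\bm{a}_t)$. Under a fixed policy input, this is an unbiased Monte Carlo estimate of $\mathbb{E}_{b_t}[r]$. Since the $\bm{c}^{(i)}$ are i.i.d., the return estimate $\hat{J}_N$ has the same mean as the DR return $\hat{J}_1$ and variance $\mathrm{Var}(\hat{J}_1)/N$. The same argument applies to likelihood-ratio or actor-critic gradient estimators, so the gradient variance contributes $\sigma^2/N$ to the bound. We then invoke a standard SGD result for $L$-smooth objectives,
\begin{equation*}
\frac{1}{K}\sum_{k=1}^{K}\mathbb{E}\|\nabla J(\bm{\theta}_k)\|^2 \le O\!\left(\frac{1}{\eta K}\right) + O\!\left(\frac{\eta L\sigma^2}{N}\right),
\end{equation*}
which formalizes ``more stable optimization.''

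\emph{Robustness and sim-to-real.} We take the real system to be $\bm{c}^{\star}\in\mathcal{C}$, which is unknown. We state the claim as a bound on the worst-case or expected regret over $\bm{c}^\star$. A DRIS policy maximizes the belief-averaged return $\mathbb{E}_{\bm{c}\sim\hat{p}_N}[J(\pi,\bm{c})]$ while acting on the set statistic $\bm{z}_t$, so it is evaluated against all particles simultaneously. Assuming $J(\pi,\cdot)$ is Lipschitz in $\bm{c}$ with constant $L_J$, a uniform-convergence argument over the policy class then gives
\begin{equation*}
\left|\mathbb{E}_{\bm{c}\sim\mathcal{U}(\mathcal{C})}J(\pi,\bm{c}) - \mathbb{E}_{\hat{p}_N}J(\pi,\bm{c})\right| = O(1/\sqrt{N}).
\end{equation*}
We combine this with a covering argument: with high probability, some particle lies within the covering radius $\epsilon_N$ of $\bm{c}^\star$. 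Together these give
\begin{equation*}
J(\pi,\bm{c}^\star) \ge \min_i J(\pi,\bm{c}^{(i)}) - L_J\,\epsilon_N .
\end{equation*}
This is what justifies that policies trained on the full set transfer to $\bm{c}^\star$.

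\emph{Main obstacle.} We expect the hardest part to be the robustness comparison against DR, because DR is also trained over many episodes and in expectation averages over $\mathcal{C}$ as well. The genuine distinction is informational. A DR policy conditions on the single realized state, which implicitly reveals $\bm{c}$ in simulation and lets the policy specialize to it. A DRIS policy must instead commit to one action that is good for the whole belief. We would formalize this by showing that the DRIS objective equals the objective of the belief-MDP, i.e.\ the POMDP in which $\bm{c}$ is unobserved, which is exactly the real-world information structure. By contrast, the DR objective corresponds to the fully observed MDP, whose optimal value overestimates the value achievable in deployment. Bounding this gap, for instance by the value of information about $\bm{c}$, would be the central technical step.
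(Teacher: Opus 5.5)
Your first step is sound and is the paper's Proposition in different form: you push the empirical measure forward, $(\Phi_t)_{\#}\hat p_N$, where the paper substitutes it into the Liouville equation and uses the sifting property. One caveat: your law-of-large-numbers limit $(\Phi_t)_{\#}\hat p_N\to b_t$ only makes sense for a fixed action sequence, whereas in DRIS the actions are computed from the particles themselves.

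\textbf{Stability.} Here the argument fails, because the per-instance returns and gradients are not independent. All instances share $\bm{s}_0$ and the closed-loop action $\bm{a}_t=\pi_{\bm\theta}(\psi(\mathcal{S}_t))$, which depends on every particle. For a likelihood-ratio or PPO estimator, the score $\nabla_{\bm\theta}\log\pi_{\bm\theta}(\bm a_t\mid\bm z_t)$ is a common factor multiplying every instance's advantage, so the action-sampling noise does not average out at all. What i.i.d. $\bm c^{(i)}$ plus a permutation-invariant policy (the max-pooling encoder) actually give you is exchangeability, not independence. The correct computation is then the paper's Lemma and Proposition: $\mathrm{Tr}\,\mathrm{Cov}(\hat{\bm g}_{DRIS})=\sigma^2\bigl(\rho+(1-\rho)/N\bigr)$ with $\rho=\mathrm{Tr}(\bm\Sigma_{\text{cross}})/\sigma^2$. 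This has a floor $\rho\sigma^2$ and gives strict reduction only when $\rho<1$. Your $\sigma^2/N$, and the SGD rate built on it, overstate the effect.

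\textbf{Robustness.} This is where the paper's key idea is missing. The paper expands the per-step cost to second order around $\bm\mu_t$:
$\hat J_{N,t}=C_t(\bm\mu_t)+\bm g_t^\top\bar{\bm\delta}_{N,t}+\tfrac12\mathrm{Tr}(\mathbf H_t\hat{\bm\Sigma}_{N,t})$.
It then shows $\hat J_{N,t}\xrightarrow{P}C_t(\bm\mu_t)+\tfrac12\mathrm{Tr}(\mathbf H_t\bm\Sigma_t)$. Averaging over the set suppresses the first-order noise at rate $O(1/\sqrt N)$ and exposes, under $\mathbf H_t\succeq 0$, a penalty on physics-induced state spread that a single instance never sees.

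Your covering bound $J(\pi,\bm c^\star)\ge\min_i J(\pi,\bm c^{(i)})-L_J\epsilon_N$ cannot play this role:
\begin{itemize}
\item It holds for any policy and any parameter sample, and the parameters drawn across DR episodes cover $\mathcal C$ just as well, so it cannot separate DRIS from DR.
\item It involves $\min_i$, while DRIS optimizes the mean.
\item It needs returns Lipschitz in $\bm c$, which is implausible with impacts, and it needs $\bm c^\star\in\mathcal C$.
\item It equates the deployment return (singleton input $\psi(\{\bm s\})$) with training returns obtained under set inputs.
\end{itemize}

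The belief-MDP route you name as the central step is unfinished, and it is also mis-specified. The DRIS ensemble is the prior pushforward with no measurement update. Its objective is therefore that of acting with no state observations after $\bm s_0$, effectively an open-loop plan. That is not the real-world structure, in which $\bm s_t$ is observed and $\bm c$ is hidden. A DR policy conditioned on $\bm s_t$ alone already has exactly that real-world structure, so there is no fully-observed value gap to bound.

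\textbf{Sim-to-real.} Your uniform-convergence step is close to the paper's, but the $O(1/\sqrt N)$ rate needs a capacity assumption on $\bm\Theta$; Lipschitzness in a four-dimensional $\bm c$ alone yields only about $N^{-1/4}$. The paper keeps $\mathfrak R_N(\mathcal L_{\bm\Theta})$ explicit and uses McDiarmid with $B=r_{\max}/(1-\gamma)$ plus symmetrization. It handles $p_T\neq p_S$ through the IPM term $d_{\mathcal L_{\bm\Theta}}(p_S,p_T)$ rather than through covering.
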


As illustrated in Fig.~\ref{fig:task} (left) with a catching task,
DRIS can typically be implemented by a set of multiple object instances,
each initialized with different physical properties (domain parameters) such as shape, weight, and friction.
The instances evolve independently, i.e., they do not interact with one another, 
and their respective state transitions under the same robot action are evaluated in parallel for policy update.

\begin{figure}[t]
\begin{algorithm}[H]
\caption{DRIS-based Policy Learning and Inference}
    \begin{algorithmic}[1]
        \Statex \Require Domain parameter space $\mathcal{C}$, initial policy $\pi_{\bm{\theta}}$, DRIS encoder $\psi$, dynamics $\mathcal{F}$ modeled by a simulator
        \Statex
        
        \Statex \textbf{\emph{Part (a): Policy Learning in Simulation}}
        \While{training}
            \State $\bm{s}_0 \gets \Call{InitState}{\null}$ \hfill \Comment{Random Initialization}
            \State $\mathcal{D}_0 \gets \{\}$  \hfill \Comment{Initialize DRIS}
            \For{$i = 1, \cdots, N$}
                \State $\bm{c}^{(i)} \sim \mathcal{U}\left(\mathcal{C}\right)$, $\bm{s}_0^{(i)} \gets \bm{s}_0$
                \State $\mathcal{D}_0 \gets \mathcal{D}_0 \cup \left\{ \left(\bm{s}_0^{(i)}, \, \bm{c}^{(i)} \right) \right\}$
            \EndFor
            \State $\mathcal{T} \gets \{\}$ \hfill \Comment{Simulation Rollout}
            \For{$t = 0, \cdots, T-1$}
                \State $\mathcal{S}_t \gets \text{proj}_{\mathcal{S}} \left( \mathcal{D}_t \right)$ \hfill \Comment{DRIS State}
                \State $\bm{z}_t \gets \psi \left( \mathcal{S}_t \right)$, $\bm{a}_t \gets \pi_{\bm{\theta}} \left(\bm{z}_t\right)$ 
                \State $r_t \gets \frac{1}{N} \sum_{\bm{s} \in \mathcal{S}_t} r\left(\bm{s}, \, \bm{a}_t\right)$ \hfill \Comment{Average Reward}
                \State $\mathcal{D}_{t+1} \gets \mathcal{F} \left(\mathcal{D}_t, \, \bm{a}_t\right)$ \hfill \Comment{Propagate by Eq.~\eqref{eq:dris_prop}}
                \State $\mathcal{T} \gets \mathcal{T} \cup \left(\bm{z}_t, \bm{a}_t, r_t\right)$
            \EndFor
            \State $\pi_{\bm{\theta}} \gets \Call{PolicyUpdate}{\pi_{\bm{\theta}}, \mathcal{T}}$ \hfill \Comment{Policy Update}
        \EndWhile
        \Statex
        
        \Statex \textbf{\emph{Part (b): Zero-Shot Inference in Real World}}
        \While{\textbf{not} finished}
            \State $\bm{s} \gets \Call{ObserveState}{\null}$ \hfill \Comment{Real-World State}
            \State $\bm{z} \gets \psi \left( \left\{ \bm{s} \right\} \right)$, $\bm{a} \gets \pi_{\bm{\theta}} \left( \bm{z} \right)$ \hfill \Comment{Generate Action}
            \State finished $\gets \Call{Execute}{\bm{a}}$
        \EndWhile
    \end{algorithmic}
    \label{alg:dris}
\end{algorithm}
\end{figure}

We summarize the DRIS-based policy learning pipeline (using an on-policy example) and its zero-shot inference in Alg.~\ref{alg:dris}.
At each training iteration in simulation, we randomly initialize the system state $\bm{s}_0$ and sample $N$ different domain parameters from $\mathcal{C}$ to construct the DRIS,
with all instances initialized at $\bm{s}_0$.
We then collect on-policy rollouts to update the policy, where the rollout stores the encoded latent state $\bm{z}_t$ rather than the full set of instance states $\left\{\bm{s}_t^{(i)}\right\}_{i=1}^N$.
After training, the policy can be deployed in the real world without fine-tuning:
At each time step,
the observed single real-world state $\bm{s}$ (equivalent to a DRIS of size $N=1$) is encoded and provided as input to the trained policy to infer the next action,
repeatedly until the task is finished.

%% file: includes/problem.tex
\section{Reactive Catching Case Study}
\label{sec:prob}

We instantiate the proposed DRIS-based learning strategy on the problem of reactive ball catching with an $M$-DoF robot manipulator.
As illustrated in Fig.~\ref{fig:first_pic}, 
the robot is equipped with a flat plate rigidly attached to its end-effector and is required to catch a ball thrown toward it.
To successfully perform this task, 
the robot must approach and make contact with the incoming ball, absorb its kinetic energy through controlled motions, and subsequently stabilize and balance the ball once it comes to rest on the plate.

\subsection{Problem Representation}
\label{sec:problem_form}

To enable a more compact problem representation and ensure the catching policy is invariant to the ball's incoming direction,
we define a fixed motion frame relative to the ball at $t=0$ (the beginning of manipulation).
All task-related quantities (e.g., state and action) are expressed in this motion frame.
As shown in Fig.~\ref{fig:task} (right-a),
the motion frame is constructed with its origin at the plate's center, its $Z$-axis aligned vertically upward, and its $X$-$Z$ plane passing through the ball's center.
The transformation from the robot's base frame to the motion frame is denoted by ${}^{b}\bm{T}_m = \left(\bm{R}_z(\phi), \, \bm{p}^e_0 \right) \in SE(3)$,
where $\bm{R}_z(\phi) \in SO(3)$ denotes a rotation about the $Z$-axis by angle $\phi$ and $\bm{p}^e_0 \in \mathbb{R}^3$ represents the plate's center position at $t=0$, expressed in the robot's base frame.

\textbf{State.}
The state of the system consists of the ball's displacement relative to the plate's center and its linear velocity, both expressed in the motion frame.
Specifically, as shown in Fig.~\ref{fig:task} (right-b),
the state is denoted as $\bm{s}_t = \left(\bm{d}_t, \, \bm{v}_t \right) \in \mathbb{R}^6$, where $\bm{d}_t = \bm{R}_{z}(\phi)^\top \left(\bm{p}^o_t - \bm{p}^e_t \right) \in \mathbb{R}^3$ is the displacement of the ball (with position $\bm{p}^o_t$) relative to the plate's center (with position $\bm{p}^e_t$) at time $t$, expressed in the motion frame;
$\bm{v}_t = \bm{R}_{z}(\phi)^\top \bm{v}^o_t \in \mathbb{R}^3$ is the ball's linear velocity $\bm{v}^o_t$ transformed into the motion frame.

\textbf{Action.}
The action is defined as $\bm{a}_t = \left(\bm{\delta}_t, \, \bm{u}_t \right) \in \mathbb{R}^5$,
where $\bm{\delta}_t \in \mathbb{R}^3$ denotes the displacement command of the plate's center expressed in the motion frame, 
i.e., the corresponding target position of the plate's center in the robot's base frame is computed as $\Tilde{\bm{p}}^e_t = \bm{R}_{z}(\phi)\, \bm{\delta}_t + \bm{p}^e_t$.
The remaining action component
$\bm{u}_t = (\alpha_t, \, \beta_t) \in \mathbb{R}^2$ represents the target tilting configuration of the plate in 3D.
Specifically, the plate is rotated about a horizontal axis $\bm{w}_t = \left( \cos{\alpha_t}, \, \sin{\alpha_t}, \, 0 \right)^\top$ by an angle $\beta_t$, where $\alpha_t \in \left[0, \, 2\pi \right)$ and $\beta_t \in \left[0, \, \pi / 4 \right)$.
This rotation tilts the plate's normal vector $\bm{n}_t \in \mathbb{R}^3$ as illustrated in Fig.~\ref{fig:task} (right-c).
During execution,
the complete action $\bm{a}_t$ is converted into a desired plate pose, which is then used to command the robot via joint torque control,
as will be detailed in Sec.~\ref{sec:control}.

\textbf{Reward.}
Given the observed state $\bm{s}_t = \left(\bm{d}_t, \, \bm{v}_t \right)$ and the plate normal $\bm{n}_t \in \mathbb{R}^3$ with $\lVert \bm{n}_t \rVert = 1$ (all in the motion frame),
we decompose the ball's displacement and velocity into components perpendicular and parallel to the plate surface $d_{\perp}$, $d_{\parallel}$, $v_{\perp}$, $v_{\parallel} \in \mathbb{R}$, as in Fig.~\ref{fig:task} (right-b):
\begin{equation}
\begin{aligned}
    d_{\perp} &= \bm{d}_t \cdot \bm{n}_t, \quad d_{\parallel} = \lVert \bm{d}_t - d_{\perp} \cdot \bm{n}_t \rVert \\
    v_{\perp} &= \bm{v}_t \cdot \bm{n}_t, \quad v_{\parallel} = \lVert \bm{v}_t - v_{\perp} \cdot \bm{n}_t \rVert
\end{aligned}
\end{equation}
The reward is the sum of a velocity term $r_v \in \left(0, \, 1\right]$ (which yields a higher value when the ball's velocity is kept lower) and a constant penalty term $r_p = -1$ applied when the ball moves beneath or outside the plate: 
\begin{equation}
\begin{aligned}
     r_t &= r(\bm{s}_t, \, \bm{a}_t) = r_v + r_p, \\
     r_v &= \frac{1}{2}\exp \left(-\frac{ v_{\parallel}^2}{\eta^2} \right) + \frac{1}{2}\exp \left(- \frac{\max\left\{v_{\perp}, \, -0.1\right\}^2}{\eta^2}\right),\\
     r_p &= -\mathbbm{1}\left\{ d_{\perp} < 0 \lor d_{\parallel} > l_e\right\}
\end{aligned}
\label{eq:reward}
\end{equation}
where $\eta \in \mathbb{R}$ in the $r_v$ term is a decay coefficient controlling the sensitivity to ball velocity, 
and $l_e \in \mathbb{R}$ in $r_p$ term denotes the plate's size (e.g., half length).

\begin{figure*}[t]
    \centering
    \includegraphics[width=\linewidth]{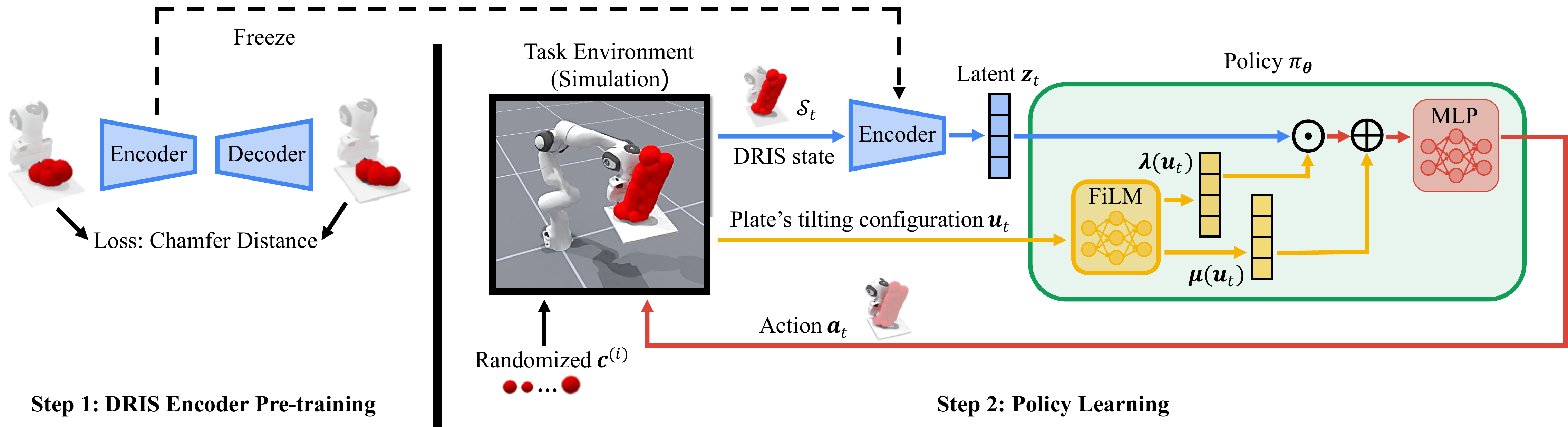}
    \caption{
        Schematic overview of the DRIS-based learning pipeline.
        \emph{Step 1 (Left):} The DRIS encoder is pre-trained to reconstruct the input DRIS state.
        \emph{Step 2 (Right):} The policy network is trained via RL using the pre-trained (and frozen) DRIS encoder.
        At the beginning of each simulation episode, 
        the physical property $\bm{c}^{(i)}$ of each ball instance in the DRIS is randomized; 
        at each time step,
        the DRIS state $\mathcal{S}_t$ is encoded and fed to the policy $\pi_{\bm{\theta}}$ to generate an action $\bm{a}_t$.
    }
    \label{fig:learning}
\end{figure*}

\subsection{Training with DRIS}
\label{sec:train_dris}

For the reactive catching problem, 
we consider four physical properties as the domain parameters: the ball's radius, static friction, dynamic friction, and restitution coefficient, i.e., $\mathcal{C} \subset \mathbb{R}^4$.
As shown in Fig.~\ref{fig:learning}, 
in each episode, we spawn $N$ balls whose physical properties are independently sampled from $\mathcal{C}$ to construct the DRIS.
In a vectorized representation, the DRIS state is formed by concatenating the states (i.e., relative displacements and velocities) of these ball instances,
i.e., $\mathcal{S}_t = \left(\bm{s}_t^{(1)}, \cdots, \bm{s}_t^{(N)} \right) \in \mathbb{R}^{N \times 6}$.

As described in Sec.~\ref{sec:s2r}, an encoder $\psi(\cdot)$ is required to transform the DRIS state into a latent vector in a compact feature space.
The encoder must be size-agnostic to the DRIS size $N$, mapping any set to a fixed-dimensional latent feature $\bm{z}_t \in \mathbb{R}^{d_z}$ where $d_z$ denotes the dimension of the latent feature space.
We adopt a point cloud-based Autoencoder (AE)~\cite{achlioptas2018learning} and extend its input dimensionality from 3D to 6D by incorporating both the positions and velocities of balls.
The point cloud AE consists of an encoder built from 1D convolutional layers followed by a feature-wise max-pooling layer to extract features of the input set~\cite{qi2017pointnet}, and an MLP-based decoder that reconstructs the original set from the latent feature.
We collect a dataset of DRIS samples by executing random robot actions, 
and pre-train this AE to reconstruct DRIS state using the Chamfer distance as reconstruction loss: 
\begin{equation}
\mathcal{L}_{\psi}(\mathcal{S}_t, \widetilde{\mathcal{S}}) = \frac{1}{N} \sum_{\bm{s} \in \mathcal{S}_t} \min_{\bm{s}' \in \widetilde{\mathcal{S}}} \lVert \bm{s} - \bm{s}' \rVert^2 + \frac{1}{\lvert \widetilde{\mathcal{S}} \rvert} \sum_{\bm{s}' \in \widetilde{\mathcal{S}}} \min_{\bm{s} \in \mathcal{S}_t} \lVert \bm{s} - \bm{s}' \rVert^2
\end{equation}
where $\mathcal{S}_t$ and $\widetilde{\mathcal{S}}$ denote the input and reconstructed DRIS state, respectively.
The encoder from the pre-trained AE is then used as $\psi(\cdot)$ for downstream policy learning.

With RL,
at each training step, 
the pre-trained encoder $\psi$ transforms the observed DRIS state $\mathcal{S}_t$ into a latent vector $\bm{z}_t$;
the policy, operating in the latent feature space (as detailed in Sec.~\ref{sec:policy}), 
is trained using a policy gradient algorithm such as Proximal Policy Optimization (PPO)~\cite{schulman2017proximal}.

\subsection{FiLM-Conditioned Policy Network}
\label{sec:policy}

The policy $\pi_{\bm{\theta}}$ consists of two components: a Feature-wise Linear Modulation (FiLM)~\cite{perez2018film} module and an MLP-based action generation network,
as illustrated in Fig.~\ref{fig:learning}.

At each step $t$, given the encoded state feature $\bm{z}_t$ and the observed plate tilting orientation $\bm{u}_t$ (which serves as the conditional signal), the FiLM module applies a feature-wise affine transformation:
\begin{equation}
    \Tilde{\bm{z}}_t = \text{FiLM}\left(\bm{z}_t, \, \bm{u}_t \right) = \bm{\lambda}\left(\bm{u}_t\right) \odot \bm{z}_t + \bm{\mu}\left(\bm{u}_t\right)
\end{equation}
where $\bm{\lambda}\left(\bm{u}_t\right)$, $\bm{\mu}\left(\bm{u}_t\right) \in \mathbb{R}^{d_z}$ are outputs of two small neural networks that generate scaling and bias coefficients, 
and $\odot$ denotes element-wise multiplication.
The FiLM-modulated feature $\Tilde{\bm{z}}_t \in \mathbb{R}^{d_z}$ is then passed through an MLP-based action-generation network to produce the action $\bm{a}_t$:
\begin{equation}
    \bm{a}_t = \pi_{\bm{\theta}} (\bm{z}_t) = \text{MLP}\left(\Tilde{\bm{z}}_t\right) = \text{MLP} \left(\text{FiLM}\left(\bm{z}_t, \, \bm{u}_t\right)\right)
\end{equation}
The policy parameters $\bm{\theta}$ include the learnable weights of the FiLM modulation networks $\bm{\lambda}(\cdot)$ and $\bm{\mu}(\cdot)$ as well as those of the action-generation MLP.
During training, all parameters are jointly optimized through gradient-based updates.

The use of FiLM conditioning is motivated by the contact physics of the catching task.
When the ball rests on the plate,
the plate’s orientation (tilting angle) directly alters the tangential component of gravity along the plate surface and the contact direction, 
thereby affecting the ball’s acceleration.
By modulating the latent feature with the plate's orientation, 
the policy can dynamically adapt its action generation to account for these gravity-induced variations,
enabling more physically consistent control across different tilt configurations.

\subsection{Control Implementation on High-DoF Manipulator}
\label{sec:control}

As described in Sec.~\ref{sec:problem_form},
the action $\bm{a}_t$ consists of a desired plate displacement $\bm{\delta}_t$ and a tilting configuration $\bm{u}_t = \left( \alpha_t, \, \beta_t \right)$ which corresponds to a rotation of the plate about a horizontal axis $\bm{w}_t = \left( \cos{\alpha_t}, \,  \sin{\alpha_t}, \, 0 \right)^\top$ by an angle $\beta_t$ (all quantities expressed in the motion frame).
The desired plate pose (expressed in the robot's base frame) is given by: $\widetilde{\bm{T}}_t = \left(\widetilde{\bm{R}}^e_t, \, \Tilde{\bm{p}}^e_t\right) \in SE(3)$,
where the orientation and position components are computed using Rodrigues' formula ($\hat{\bm{w}_t} \in \mathbb{R}^{3\times 3}$ is the skew-symmetric matrix of $\bm{w}_t$):
\begin{equation}
\begin{aligned}
    \widetilde{\bm{R}}^e_t &= \bm{R}_z (\phi) \left(\mathbb{I} + \hat{\bm{w}_t} \sin{\beta_t} + \hat{\bm{w}_t}^2 \left(1 - \cos{\beta_t}\right) \right) \in SO(3) \\
    \Tilde{\bm{p}}^e_t &= \bm{R}_{z}(\phi) \, \bm{\delta}_t + \bm{p}^e_t \in \mathbb{R}^3
\end{aligned}
\end{equation}

The desired joint configuration of the robot $\Tilde{\bm{q}}_t$ is obtained via inverse kinematics ($M$ being the robot's DoF): $\Tilde{\bm{q}}_t = \text{IK} \left( \widetilde{\bm{T}}_t \right) \in \mathbb{R}^M$.
The desired joint configuration is tracked using a joint-space torque controller that generates real-time joint torques (no inertial feedforward term is included, as the desired joint acceleration is zero):
\begin{equation}
    \bm{\tau} = \bm{K} \left(\Tilde{\bm{q}}_t - \bm{q} \right) - \bm{D} \dot{\bm{q}} + \bm{C}\left(\bm{q}, \, \dot{\bm{q}}\right) + \bm{g}(\bm{q}) \in \mathbb{R}^M
\label{eq:control}
\end{equation}
where $\bm{q}$, $\dot{\bm{q}} \in \mathbb{R}^M$ are the observed joint angles and velocities from sensor readings;
$\bm{K}$, $\bm{D} \in \mathbb{R}^{M\times M}$ are the stiffness and damping gain matrices;
$\bm{C}\left(\bm{q}, \, \dot{\bm{q}}\right)$, $\bm{g}(\bm{q}) \in \mathbb{R}^M$ are the Coriolis and gravity terms, respectively.

%% file: includes/expsim.tex
\section{Training and Validation in Simulation}
\label{sec:expsim}

In this section, we implement the reactive catching task, 
train the policies purely in simulation using our proposed strategy, 
and evaluate its performance under various sources of uncertainty.

\subsection{Training in Simulation}
\label{sec:train_sim}
The task simulation was implemented in ManiSkill~\cite{tao2025maniskill3}.
We instantiated $128$ parallel simulation environments for training, each containing a robot manipulating a set of multiple different balls (i.e., DRIS) for up to $20$ steps (corresponding to one second of simulated time).
When launching each environment instance, the balls are randomly positioned at a horizontal distance between $\left[1.0, \, 2.0\right]m$ away from the plate and assigned an initial velocity such that they reach a predefined catching region after $\left[1.0, \, 1.5\right]s$, approaching from a random incoming direction.
The catching region is defined as a zone at the same height as the plate and within a horizontal radius of $0.2m$ from its center.
Each episode begins when the ball is about to enter this region, 
specifically within a random lead time of $\left[0.08, 0.12\right]s$ before arrival,
at which point the robot starts executing generated actions.
The initial joint angles of the robot are sampled from a Gaussian distribution centered at the home configuration with a standard deviation of $0.02$ radians per joint.
We disable collision detection in simulation between balls within the same DRIS and between balls and all robot links except the plate (i.e., only ball-plate collisions are enabled) to eliminate unmodeled contacts for stable training.

We first collected training data for the DRIS encoder by allowing the robot to interact with the DRIS containing $200$ balls using random actions for $50$ episodes.
Across all $128$ parallel environment instances, this resulted in a total of $128,000$ recorded DRIS state samples (i.e., the states of all balls at each step).
The encoder was then trained on this dataset for $100$ epochs, which took approximately $10$ minutes.
We then froze the pre-trained DRIS encoder and used it to train the FiLM-conditioned policy for $1000$ epochs with PPO, which took approximately $2$ hours.

\begin{figure*}[t]
    \centering

    \begin{minipage}[t]{0.77\linewidth}
        \centering
        \begin{minipage}[b]{\linewidth}
            \begin{minipage}{0.19\linewidth}
                \input{figs/eval_reward.tex}  
            \end{minipage}
            \begin{minipage}{0.19\linewidth}
                \input{figs/eval_1_reward.tex}  
            \end{minipage}
            \begin{minipage}{0.19\linewidth}
                \input{figs/eval_2_reward.tex}  
            \end{minipage}
            \begin{minipage}{0.19\linewidth}
                \input{figs/eval_3_reward.tex}  
            \end{minipage}
            \begin{minipage}{0.19\linewidth}
                \input{figs/eval_4_reward.tex}  
            \end{minipage}
        \end{minipage}
    
        \begin{minipage}[b]{\linewidth}
            \vspace{-10pt}
            \begin{minipage}{0.19\linewidth}
                \input{figs/eval_success.tex}  
            \end{minipage}
            \begin{minipage}{0.19\linewidth}
                \input{figs/eval_1_success.tex}  
            \end{minipage}
            \begin{minipage}{0.19\linewidth}
                \input{figs/eval_2_success.tex}  
            \end{minipage}
            \begin{minipage}{0.19\linewidth}
                \input{figs/eval_3_success.tex}
            \end{minipage}
            \begin{minipage}{0.19\linewidth}
                \input{figs/eval_4_success.tex}  
            \end{minipage}
        \end{minipage}
        \vspace{-15pt}
        \subcaption{}
    \end{minipage}\hspace{0.035\linewidth} 
    \begin{minipage}[t]{0.16\linewidth}  
        \centering
        \begin{minipage}{\linewidth}
            \input{figs/noise_reward.tex}  
        \end{minipage}
        
        \begin{minipage}{\linewidth}
            \vspace{-10pt}
            \input{figs/noise_success.tex}  
        \end{minipage}
        \vspace{-15pt}
        \subcaption{}
    \end{minipage}

    \caption{
        Reward (top) and success rate (bottom) under varying observation noise.
        (a) Training curves across simulation steps, evaluated under progressively larger noise levels (left to right);
        (b) Evaluation of each policy at each noise level, averaged over the last $100$ epochs.
    }
    \label{fig:obs_eval}
\end{figure*}

\begin{figure}[t]
    \centering
    \begin{minipage}{0.49\columnwidth}
        \input{figs/exec_reward.tex}  
    \end{minipage}
    \begin{minipage}{0.49\columnwidth}
        \input{figs/exec_success.tex}  
    \end{minipage}
    \caption{
        Reward (left) and success rate (right) of each policy under execution errors, averaged over the last $100$ epochs.
    }
    \label{fig:exec_eval}
\end{figure}

\begin{figure}[t]
    \centering
    \begin{minipage}{0.49\columnwidth}
        \input{figs/obj_reward.tex}  
    \end{minipage}
    \begin{minipage}{0.49\columnwidth}
        \input{figs/obj_success.tex}  
    \end{minipage}
    \caption{
        Reward (left) and success rate (right) of each policy when manipulating balls with unseen restitution values (higher than those in training), averaged over the last $100$ epochs.
    }
    \label{fig:exec_obj}
\end{figure}

\subsection{Evaluation against Uncertainties}

To evaluate the effect of DRIS size on policy performance, we trained separate policies with different numbers of balls in the DRIS, specifically $N = 1, 10, 50, 200$.
Moreover, to compare training with the proposed DRIS-based strategy against a standard RL setting, 
we trained an additional baseline policy using the same neural network architecture but learned end-to-end directly from the observed single-ball state to the desired action.
We refer to this baseline as \emph{E2E}.
During evaluation, the DRIS-trained policies, although trained with multiple balls in each DRIS, were always tested on catching a single ball to reflect the real task setting.

We evaluate robustness against the following three sources of uncertainty:

\emph{Observation Noise.}
This setting mimics scenarios where the ball's state is not accurately estimated (e.g., due to camera calibration errors or tracking inaccuracies).
As shown in Fig.~\ref{fig:obs_eval},
we inject Gaussian noise into the observed ball states at different noise levels,
obtained by scaling a base standard deviation $\sigma$ ($1\,cm$ for position and $5\,cm/s$ for velocity) by factors ranging from $1$ to $4$.
We report both the final episode reward and success rate,
where success is defined as the ball remaining on the plate with a velocity below $0.1\, m/s$ at the end of the episode.
The E2E baseline and the DRIS policy with $N=1$, both trained using a single ball (where the DRIS policy reduces to a standard RL setting but with a pre-trained encoder), 
perform comparably under perfect observations.
However, as the observation noise increases, 
their performance degrades significantly.
Moreover, they are more prone to overfitting to the perfect observations, as can be seen from their gradual performance degradation when trained for a longer period in Fig.~\ref{fig:obs_eval}.
In contrast, DRIS policies trained with even a slightly larger number of balls (e.g., $10$) exhibit substantially improved robustness, demonstrating greater tolerance to observation noise.

\emph{Execution Error.}
Then, we added noise to the desired joint positions by uniformly sampling perturbations from $\left[-0.05, \, 0.05\right]\, rad$ to simulate robot execution errors (e.g., controller's tracking inaccuracies).
We report the final reward and success rate for each policy in Fig.~\ref{fig:exec_eval}.
Similar to the observation noise evaluation,
the E2E baseline is notably less robust than policies trained with DRIS.
While increasing the DRIS size generally improved robustness in this evaluation,
even a modest number of balls (e.g., $10$) provided an obvious boost in performance under execution noise.

\emph{Out-of-Distribution Physics.}
To evaluate how the policies perform under unseen physical parameters,
we tested them on balls with higher and more challenging restitution coefficients in $\left[0.7, 0.8\right]$,
despite being trained only within $\left[0.4, \, 0.7\right]$.
The reward and success rate are reported in Fig.~\ref{fig:exec_obj}.
Compared with both the E2E baseline and the DRIS policy trained with a single ball,
policies trained with larger DRIS generalize better to manipulate balls with unseen physical properties (restitution, in this case).

\subsection{Scalability and Computational Efficiency}

To evaluate the efficacy of DRIS from a computational fairness perspective, 
we conducted a scaling study to verify that the observed performance gains are not merely a product of increased ball interactions.
Specifically, we expanded the E2E baseline to $128 \times 50$ single-ball environments and compared it against our DRIS configuration, which utilizes $128$ environments with a DRIS size of $50$ (i.e., $50$ balls per environment), thereby matching the total number of ball interactions across both methods.

Evaluated under $2\sigma$ observation noise, the results indicate that while massive parallelization improves the E2E baseline success rate from $0.55$ to $0.73$, it does so at a substantial computational cost, increasing simulation VRAM usage from $0.65$~GB to $2$~GB. 
In contrast, the DRIS configuration achieves a significantly higher success rate of $0.89$ while increasing VRAM only modestly to $0.71$~GB. 
These results demonstrate that DRIS-based propagation provides superior robustness while remaining considerably more resource-efficient than standard environment parallelization.

%% file: figs/noise_reward.tex
\begin{tikzpicture}

\definecolor{mycolor2}{rgb}{0.23529,0.72941,0.32941}%
\definecolor{mycolor3}{rgb}{0.95686,0.76078,0.05098}%
\definecolor{mycolor5}{rgb}{0.85882,0.19608,0.21176}%
\definecolor{mycolor1}{rgb}{0.28235,0.52157,0.92941}%
\definecolor{mycolor4}{rgb}{1.00000,0.54902,0.00000}%

\begin{axis}[
width=0.75\linewidth,
height=0.83\linewidth,
scale only axis,
scaled ticks=false, 
grid=both,
grid style={line width=.1pt, draw=gray!20},
major grid style={line width=.2pt,draw=gray!50},
xmin=0, xmax=4.2,
xtick={1, 2, 3, 4},
xticklabels={$1\sigma$, $2\sigma$, $3\sigma$, $4\sigma$},
ylabel={reward},
ytick={0, 0.4, 0.8},
minor y tick num=4,
ymin=0, ymax=0.88,
axis x line*=bottom,
axis y line*=left,
legend style={legend cell align=left, align=left, legend columns = 1, fill=none, draw=none, at={(0.01, 0.36)},anchor=west}
]
\path [fill=mycolor1, fill opacity=0.3]
(axis cs:0,0.738572233548247)
--(axis cs:0,0.690648122995612)
--(axis cs:1,0.659511714512206)
--(axis cs:2,0.598543911749035)
--(axis cs:3,0.513616543563313)
--(axis cs:4,0.406017397939755)
--(axis cs:4,0.49702233264884)
--(axis cs:4,0.49702233264884)
--(axis cs:3,0.58743403970986)
--(axis cs:2,0.659847447202693)
--(axis cs:1,0.717009741729401)
--(axis cs:0,0.738572233548247)
--cycle;

\path [fill=mycolor2, fill opacity=0.3]
(axis cs:0,0.74828588485762)
--(axis cs:0,0.71262411355928)
--(axis cs:1,0.699832214314937)
--(axis cs:2,0.653653955070371)
--(axis cs:3,0.55597792178639)
--(axis cs:4,0.440304479713535)
--(axis cs:4,0.471450009231473)
--(axis cs:4,0.471450009231473)
--(axis cs:3,0.56756617913397)
--(axis cs:2,0.661982965858584)
--(axis cs:1,0.725283895533085)
--(axis cs:0,0.74828588485762)
--cycle;

\path [fill=mycolor3, fill opacity=0.3]
(axis cs:0,0.758007021352744)
--(axis cs:0,0.735113231733346)
--(axis cs:1,0.718412138823831)
--(axis cs:2,0.690522778649616)
--(axis cs:3,0.644977375568358)
--(axis cs:4,0.580908654472954)
--(axis cs:4,0.616445256926887)
--(axis cs:4,0.616445256926887)
--(axis cs:3,0.677167089083385)
--(axis cs:2,0.723804159343752)
--(axis cs:1,0.750813240007397)
--(axis cs:0,0.758007021352744)
--cycle;

\path [fill=mycolor4, fill opacity=0.3]
(axis cs:0,0.752638499776979)
--(axis cs:0,0.742764054893991)
--(axis cs:1,0.722867454160907)
--(axis cs:2,0.691961795600426)
--(axis cs:3,0.641211274993674)
--(axis cs:4,0.567896330476644)
--(axis cs:4,0.577948991337575)
--(axis cs:4,0.577948991337575)
--(axis cs:3,0.654990836250528)
--(axis cs:2,0.709321345376798)
--(axis cs:1,0.741587621897799)
--(axis cs:0,0.752638499776979)
--cycle;

\path [fill=mycolor5, fill opacity=0.3]
(axis cs:0,0.767729522826405)
--(axis cs:0,0.745779857991008)
--(axis cs:1,0.72891883741018)
--(axis cs:2,0.700981187232441)
--(axis cs:3,0.655833810820466)
--(axis cs:4,0.591595508471019)
--(axis cs:4,0.608048230593515)
--(axis cs:4,0.608048230593515)
--(axis cs:3,0.675090298797403)
--(axis cs:2,0.723925980520143)
--(axis cs:1,0.757466353076727)
--(axis cs:0,0.767729522826405)
--cycle;

\path [draw=mycolor1, very thick]
(axis cs:0,0.690648122995612)
--(axis cs:0,0.738572233548247);

\path [draw=mycolor1, very thick]
(axis cs:1,0.659511714512206)
--(axis cs:1,0.717009741729401);

\path [draw=mycolor1, very thick]
(axis cs:2,0.598543911749035)
--(axis cs:2,0.659847447202693);

\path [draw=mycolor1, very thick]
(axis cs:3,0.513616543563313)
--(axis cs:3,0.58743403970986);

\path [draw=mycolor1, very thick]
(axis cs:4,0.406017397939755)
--(axis cs:4,0.49702233264884);

\path [draw=mycolor2, very thick]
(axis cs:0,0.71262411355928)
--(axis cs:0,0.74828588485762);

\path [draw=mycolor2, very thick]
(axis cs:1,0.699832214314937)
--(axis cs:1,0.725283895533085);

\path [draw=mycolor2, very thick]
(axis cs:2,0.653653955070371)
--(axis cs:2,0.661982965858584);

\path [draw=mycolor2, very thick]
(axis cs:3,0.55597792178639)
--(axis cs:3,0.56756617913397);

\path [draw=mycolor2, very thick]
(axis cs:4,0.440304479713535)
--(axis cs:4,0.471450009231473);

\path [draw=mycolor3, very thick]
(axis cs:0,0.735113231733346)
--(axis cs:0,0.758007021352744);

\path [draw=mycolor3, very thick]
(axis cs:1,0.718412138823831)
--(axis cs:1,0.750813240007397);

\path [draw=mycolor3, very thick]
(axis cs:2,0.690522778649616)
--(axis cs:2,0.723804159343752);

\path [draw=mycolor3, very thick]
(axis cs:3,0.644977375568358)
--(axis cs:3,0.677167089083385);

\path [draw=mycolor3, very thick]
(axis cs:4,0.580908654472954)
--(axis cs:4,0.616445256926887);

\path [draw=mycolor4, very thick]
(axis cs:0,0.742764054893991)
--(axis cs:0,0.752638499776979);

\path [draw=mycolor4, very thick]
(axis cs:1,0.722867454160907)
--(axis cs:1,0.741587621897799);

\path [draw=mycolor4, very thick]
(axis cs:2,0.691961795600426)
--(axis cs:2,0.709321345376798);

\path [draw=mycolor4, very thick]
(axis cs:3,0.641211274993674)
--(axis cs:3,0.654990836250528);

\path [draw=mycolor4, very thick]
(axis cs:4,0.567896330476644)
--(axis cs:4,0.577948991337575);

\path [draw=mycolor5, very thick]
(axis cs:0,0.745779857991008)
--(axis cs:0,0.767729522826405);

\path [draw=mycolor5, very thick]
(axis cs:1,0.72891883741018)
--(axis cs:1,0.757466353076727);

\path [draw=mycolor5, very thick]
(axis cs:2,0.700981187232441)
--(axis cs:2,0.723925980520143);

\path [draw=mycolor5, very thick]
(axis cs:3,0.655833810820466)
--(axis cs:3,0.675090298797403);

\path [draw=mycolor5, very thick]
(axis cs:4,0.591595508471019)
--(axis cs:4,0.608048230593515);

\addplot [very thick, mycolor1]
table {%
0 0.71461017827193
1 0.688260728120804
2 0.629195679475864
3 0.550525291636586
4 0.451519865294298
};
\addlegendentry{E2E}
\addplot [very thick, mycolor2]
table {%
0 0.73045499920845
1 0.712558054924011
2 0.657818460464478
3 0.56177205046018
4 0.455877244472504
};
\addlegendentry{DRIS (1)}
\addplot [very thick, mycolor3]
table {%
0 0.746560126543045
1 0.734612689415614
2 0.707163468996684
3 0.661072232325872
4 0.598676955699921
};
\addlegendentry{DRIS (10)}
\addplot [very thick, mycolor4]
table {%
0 0.747701277335485
1 0.732227538029353
2 0.700641570488612
3 0.648101055622101
4 0.57292266090711
};
\addlegendentry{DRIS (50)}
\addplot [very thick, mycolor5]
table {%
0 0.756754690408707
1 0.743192595243454
2 0.712453583876292
3 0.665462054808935
4 0.599821869532267
};
\addlegendentry{DRIS (200)}
\end{axis}

\end{tikzpicture}

%% file: figs/noise_success.tex
\begin{tikzpicture}

\definecolor{mycolor2}{rgb}{0.23529,0.72941,0.32941}%
\definecolor{mycolor3}{rgb}{0.95686,0.76078,0.05098}%
\definecolor{mycolor5}{rgb}{0.85882,0.19608,0.21176}%
\definecolor{mycolor1}{rgb}{0.28235,0.52157,0.92941}%
\definecolor{mycolor4}{rgb}{1.00000,0.54902,0.00000}%

\begin{axis}[
width=0.75\linewidth,
height=0.83\linewidth,
scale only axis,
scaled ticks=false, 
grid=both,
grid style={line width=.1pt, draw=gray!20},
major grid style={line width=.2pt,draw=gray!50},
xmin=0, xmax=4.2,
xtick={1, 2, 3, 4},
xticklabels={$1\sigma$, $2\sigma$, $3\sigma$, $4\sigma$},
xlabel={noise level},
ylabel={success rate},
ytick={0, 0.5, 1.0},
minor y tick num=4,
ymin=0, ymax=1.1,
axis x line*=bottom,
axis y line*=left,
legend style={legend cell align=left, align=left, legend columns = 1, fill=none, draw=none, at={(0.05, 0.35)},anchor=west}
]
\path [fill=mycolor1, fill opacity=0.3]
(axis cs:0,0.938571221354225)
--(axis cs:0,0.915595445312441)
--(axis cs:1,0.725361638064512)
--(axis cs:2,0.471093258649548)
--(axis cs:3,0.293588835579794)
--(axis cs:4,0.163846475625582)
--(axis cs:4,0.302820191041084)
--(axis cs:4,0.302820191041084)
--(axis cs:3,0.437661164420206)
--(axis cs:2,0.609115074683785)
--(axis cs:1,0.818388361935488)
--(axis cs:0,0.938571221354225)
--cycle;

\path [fill=mycolor2, fill opacity=0.3]
(axis cs:0,0.980236532415877)
--(axis cs:0,0.91768013425079)
--(axis cs:1,0.766173065184037)
--(axis cs:2,0.516352891915351)
--(axis cs:3,0.31933314740277)
--(axis cs:4,0.188314696381721)
--(axis cs:4,0.265331136951612)
--(axis cs:4,0.265331136951612)
--(axis cs:3,0.41504185259723)
--(axis cs:2,0.620626274751315)
--(axis cs:1,0.880701934815963)
--(axis cs:0,0.980236532415877)
--cycle;

\path [fill=mycolor3, fill opacity=0.3]
(axis cs:0,0.986553813252889)
--(axis cs:0,0.974383686747111)
--(axis cs:1,0.96188368674711)
--(axis cs:2,0.864842218190257)
--(axis cs:3,0.698857741356693)
--(axis cs:4,0.515605622608396)
--(axis cs:4,0.540123544058271)
--(axis cs:4,0.540123544058271)
--(axis cs:3,0.73603809197664)
--(axis cs:2,0.909116115143076)
--(axis cs:1,0.974053813252889)
--(axis cs:0,0.986553813252889)
--cycle;

\path [fill=mycolor4, fill opacity=0.3]
(axis cs:0,0.98700079322229)
--(axis cs:0,0.980707540111043)
--(axis cs:1,0.948009911097417)
--(axis cs:2,0.871420487089987)
--(axis cs:3,0.736980857755291)
--(axis cs:4,0.553053241467131)
--(axis cs:4,0.588092591866202)
--(axis cs:4,0.588092591866202)
--(axis cs:3,0.777081642244708)
--(axis cs:2,0.930662846243346)
--(axis cs:1,0.98428175556925)
--(axis cs:0,0.98700079322229)
--cycle;

\path [fill=mycolor5, fill opacity=0.3]
(axis cs:0,0.99196989947994)
--(axis cs:0,0.988759267186727)
--(axis cs:1,0.974941116986747)
--(axis cs:2,0.912975738694641)
--(axis cs:3,0.751325193081151)
--(axis cs:4,0.572704655941138)
--(axis cs:4,0.627295344058862)
--(axis cs:4,0.627295344058862)
--(axis cs:3,0.815862306918849)
--(axis cs:2,0.938586761305359)
--(axis cs:1,0.986517216346587)
--(axis cs:0,0.99196989947994)
--cycle;

\path [draw=mycolor1, very thick]
(axis cs:0,0.915595445312441)
--(axis cs:0,0.938571221354225);

\path [draw=mycolor1, very thick]
(axis cs:1,0.725361638064512)
--(axis cs:1,0.818388361935488);

\path [draw=mycolor1, very thick]
(axis cs:2,0.471093258649548)
--(axis cs:2,0.609115074683785);

\path [draw=mycolor1, very thick]
(axis cs:3,0.293588835579794)
--(axis cs:3,0.437661164420206);

\path [draw=mycolor1, very thick]
(axis cs:4,0.163846475625582)
--(axis cs:4,0.302820191041084);

\path [draw=mycolor2, very thick]
(axis cs:0,0.91768013425079)
--(axis cs:0,0.980236532415877);

\path [draw=mycolor2, very thick]
(axis cs:1,0.766173065184037)
--(axis cs:1,0.880701934815963);

\path [draw=mycolor2, very thick]
(axis cs:2,0.516352891915351)
--(axis cs:2,0.620626274751315);

\path [draw=mycolor2, very thick]
(axis cs:3,0.31933314740277)
--(axis cs:3,0.41504185259723);

\path [draw=mycolor2, very thick]
(axis cs:4,0.188314696381721)
--(axis cs:4,0.265331136951612);

\path [draw=mycolor3, very thick]
(axis cs:0,0.974383686747111)
--(axis cs:0,0.986553813252889);

\path [draw=mycolor3, very thick]
(axis cs:1,0.96188368674711)
--(axis cs:1,0.974053813252889);

\path [draw=mycolor3, very thick]
(axis cs:2,0.864842218190257)
--(axis cs:2,0.909116115143076);

\path [draw=mycolor3, very thick]
(axis cs:3,0.698857741356693)
--(axis cs:3,0.73603809197664);

\path [draw=mycolor3, very thick]
(axis cs:4,0.515605622608396)
--(axis cs:4,0.540123544058271);

\path [draw=mycolor4, very thick]
(axis cs:0,0.980707540111043)
--(axis cs:0,0.98700079322229);

\path [draw=mycolor4, very thick]
(axis cs:1,0.948009911097417)
--(axis cs:1,0.98428175556925);

\path [draw=mycolor4, very thick]
(axis cs:2,0.871420487089987)
--(axis cs:2,0.930662846243346);

\path [draw=mycolor4, very thick]
(axis cs:3,0.736980857755291)
--(axis cs:3,0.777081642244708);

\path [draw=mycolor4, very thick]
(axis cs:4,0.553053241467131)
--(axis cs:4,0.588092591866202);

\path [draw=mycolor5, very thick]
(axis cs:0,0.988759267186727)
--(axis cs:0,0.99196989947994);

\path [draw=mycolor5, very thick]
(axis cs:1,0.974941116986747)
--(axis cs:1,0.986517216346587);

\path [draw=mycolor5, very thick]
(axis cs:2,0.912975738694641)
--(axis cs:2,0.938586761305359);

\path [draw=mycolor5, very thick]
(axis cs:3,0.751325193081151)
--(axis cs:3,0.815862306918849);

\path [draw=mycolor5, very thick]
(axis cs:4,0.572704655941138)
--(axis cs:4,0.627295344058862);

\addplot [very thick, mycolor1]
table {%
0 0.927083333333333
1 0.771875
2 0.540104166666667
3 0.365625
4 0.233333333333333
};
\addplot [very thick, mycolor2]
table {%
0 0.948958333333333
1 0.8234375
2 0.568489583333333
3 0.3671875
4 0.226822916666667
};
\addplot [very thick, mycolor3]
table {%
0 0.98046875
1 0.96796875
2 0.886979166666667
3 0.717447916666667
4 0.527864583333333
};
\addplot [very thick, mycolor4]
table {%
0 0.983854166666667
1 0.966145833333333
2 0.901041666666667
3 0.75703125
4 0.570572916666667
};
\addplot [very thick, mycolor5]
table {%
0 0.990364583333333
1 0.980729166666667
2 0.92578125
3 0.78359375
4 0.6
};
\end{axis}

\end{tikzpicture}

%% file: figs/exec_reward.tex
\begin{tikzpicture}

\definecolor{mycolor2}{rgb}{0.23529,0.72941,0.32941}%
\definecolor{mycolor3}{rgb}{0.95686,0.76078,0.05098}%
\definecolor{mycolor5}{rgb}{0.85882,0.19608,0.21176}%
\definecolor{mycolor1}{rgb}{0.28235,0.52157,0.92941}%
\definecolor{mycolor4}{rgb}{1.00000,0.54902,0.00000}%

\begin{axis}[
width=0.71\linewidth,
height=0.4\linewidth,
scale only axis,
scaled ticks=false, 
grid=both,
grid style={line width=.1pt, draw=gray!20},
major grid style={line width=.2pt,draw=gray!50},
xmin=0, xmax=7,
xtick={1, 3, 4, 5, 6, 4.5},
xticklabel style={align=center}, 
xticklabels={
    {{\\[1.5ex]E2E}},      
    {{$1$\\[1.5ex]}},        
    {{$10$\\[1.5ex]}},
    {{$50$\\[1.5ex]}},
    {{$200$\\[1.5ex]}},
    {{\\[1.5ex]DRIS}}
},
ylabel={reward},
ytick={0.65, 0.70, 0.75},
xmajorgrids=false,
xminorgrids=false,
ymajorgrids=true,
yminorgrids=true,
minor y tick num=4,
ymin=0.65, ymax=0.76,
axis x line*=bottom,
axis y line*=left,
xtick style={draw=none},
legend style={legend cell align=left, align=left, legend columns = 2, fill=none, draw=none, at={(0.05, 0.75)},anchor=west}
]
\draw[draw=none,fill=mycolor1, fill opacity=0.95] (axis cs:0.6,0) rectangle (axis cs:1.4,0.698356055219968);

\draw[draw=none,fill=mycolor2, fill opacity=0.95] (axis cs:2.6,0) rectangle (axis cs:3.4,0.715030952294668);

\draw[draw=none,fill=mycolor3, fill opacity=0.95] (axis cs:3.6,0) rectangle (axis cs:4.4,0.726025627056758);

\draw[draw=none,fill=mycolor4, fill opacity=0.95] (axis cs:4.6,0) rectangle (axis cs:5.4,0.724824907382329);

\draw[draw=none,fill=mycolor5, fill opacity=0.95] (axis cs:5.6,0) rectangle (axis cs:6.4,0.73106258114179);

\path [draw=black, very thick]
(axis cs:1,0.674625889786298)
--(axis cs:1,0.722086220653639);

\path [draw=black, very thick]
(axis cs:3,0.6996779392703)
--(axis cs:3,0.730383965319035);

\path [draw=black, very thick]
(axis cs:4,0.712581199098367)
--(axis cs:4,0.739470055015148);

\path [draw=black, very thick]
(axis cs:5,0.717056608823571)
--(axis cs:5,0.732593205941087);

\path [draw=black, very thick]
(axis cs:6,0.719225307451042)
--(axis cs:6,0.742899854832538);

\end{axis}

\end{tikzpicture}

%% file: figs/exec_success.tex
\begin{tikzpicture}

\definecolor{mycolor2}{rgb}{0.23529,0.72941,0.32941}%
\definecolor{mycolor3}{rgb}{0.95686,0.76078,0.05098}%
\definecolor{mycolor5}{rgb}{0.85882,0.19608,0.21176}%
\definecolor{mycolor1}{rgb}{0.28235,0.52157,0.92941}%
\definecolor{mycolor4}{rgb}{1.00000,0.54902,0.00000}%

\begin{axis}[
width=0.71\linewidth,
height=0.4\linewidth,
scale only axis,
scaled ticks=false, 
grid=both,
grid style={line width=.1pt, draw=gray!20},
major grid style={line width=.2pt,draw=gray!50},
xmin=0, xmax=7,
xtick={1, 3, 4, 5, 6, 4.5},
xticklabel style={align=center}, 
xticklabels={
    {{\\[1.5ex]E2E}},      
    {{$1$\\[1.5ex]}},        
    {{$10$\\[1.5ex]}},
    {{$50$\\[1.5ex]}},
    {{$200$\\[1.5ex]}},
    {{\\[1.5ex]DRIS}}
},
ylabel={success rate},
ytick={0.8, 0.9, 1.0},
xmajorgrids=false,
xminorgrids=false,
ymajorgrids=true,
yminorgrids=true,
minor y tick num=4,
ymin=0.8, ymax=1.02,
axis x line*=bottom,
axis y line*=left,
xtick style={draw=none},
legend style={legend cell align=left, align=left, legend columns = 2, fill=none, draw=none, at={(0.05, 0.75)},anchor=west}
]
\draw[draw=none,fill=mycolor1, fill opacity=0.95] (axis cs:0.6,0) rectangle (axis cs:1.4,0.866145833333333);

\draw[draw=none,fill=mycolor2, fill opacity=0.95] (axis cs:2.6,0) rectangle (axis cs:3.4,0.87578125);

\draw[draw=none,fill=mycolor3, fill opacity=0.95] (axis cs:3.6,0) rectangle (axis cs:4.4,0.9234375);

\draw[draw=none,fill=mycolor4, fill opacity=0.95] (axis cs:4.6,0) rectangle (axis cs:5.4,0.943229166666667);

\draw[draw=none,fill=mycolor5, fill opacity=0.95] (axis cs:5.6,0) rectangle (axis cs:6.4,0.9390625);

\path [draw=black, very thick]
(axis cs:1,0.853478764055192)
--(axis cs:1,0.878812902611474);

\path [draw=black, very thick]
(axis cs:3,0.832574051766758)
--(axis cs:3,0.918988448233242);

\path [draw=black, very thick]
(axis cs:4,0.91791322827198)
--(axis cs:4,0.92896177172802);

\path [draw=black, very thick]
(axis cs:5,0.91830795836402)
--(axis cs:5,0.968150374969313);

\path [draw=black, very thick]
(axis cs:6,0.93353822827198)
--(axis cs:6,0.94458677172802);
\end{axis}

\end{tikzpicture}

%% file: figs/obj_reward.tex
\begin{tikzpicture}

\definecolor{mycolor2}{rgb}{0.23529,0.72941,0.32941}%
\definecolor{mycolor3}{rgb}{0.95686,0.76078,0.05098}%
\definecolor{mycolor5}{rgb}{0.85882,0.19608,0.21176}%
\definecolor{mycolor1}{rgb}{0.28235,0.52157,0.92941}%
\definecolor{mycolor4}{rgb}{1.00000,0.54902,0.00000}%

\begin{axis}[
width=0.71\linewidth,
height=0.4\linewidth,
scale only axis,
scaled ticks=false, 
grid=both,
grid style={line width=.1pt, draw=gray!20},
major grid style={line width=.2pt,draw=gray!50},
xmin=0, xmax=7,
xtick={1, 3, 4, 5, 6, 4.5},
xticklabel style={align=center}, 
xticklabels={
    {{\\[1.5ex]E2E}},      
    {{$1$\\[1.5ex]}},        
    {{$10$\\[1.5ex]}},
    {{$50$\\[1.5ex]}},
    {{$200$\\[1.5ex]}},
    {{\\[1.5ex]DRIS}}
},
ylabel={reward},
ytick={0.55, 0.60},
xmajorgrids=false,
xminorgrids=false,
ymajorgrids=true,
yminorgrids=true,
minor y tick num=4,
ymin=0.52, ymax=0.63,
axis x line*=bottom,
axis y line*=left,
xtick style={draw=none},
legend style={legend cell align=left, align=left, legend columns = 2, fill=none, draw=none, at={(0.05, 0.75)},anchor=west}
]
\draw[draw=none,fill=mycolor1, fill opacity=0.95] (axis cs:0.6,0) rectangle (axis cs:1.4,0.55723130106926);

\draw[draw=none,fill=mycolor2, fill opacity=0.95] (axis cs:2.6,0) rectangle (axis cs:3.4,0.565037780006727);

\draw[draw=none,fill=mycolor3, fill opacity=0.95] (axis cs:3.6,0) rectangle (axis cs:4.4,0.599714332818985);

\draw[draw=none,fill=mycolor4, fill opacity=0.95] (axis cs:4.6,0) rectangle (axis cs:5.4,0.584420905510584);

\draw[draw=none,fill=mycolor5, fill opacity=0.95] (axis cs:5.6,0) rectangle (axis cs:6.4,0.598148853580157);

\path [draw=black, very thick]
(axis cs:1,0.534699335080428)
--(axis cs:1,0.579763267058091);

\path [draw=black, very thick]
(axis cs:3,0.522717182255958)
--(axis cs:3,0.607358377757495);

\path [draw=black, very thick]
(axis cs:4,0.573013248523797)
--(axis cs:4,0.626415417114173);

\path [draw=black, very thick]
(axis cs:5,0.567873322057368)
--(axis cs:5,0.600968488963801);

\path [draw=black, very thick]
(axis cs:6,0.581433582477318)
--(axis cs:6,0.614864124682996);

\end{axis}

\end{tikzpicture}

%% file: figs/obj_success.tex
\begin{tikzpicture}

\definecolor{mycolor2}{rgb}{0.23529,0.72941,0.32941}%
\definecolor{mycolor3}{rgb}{0.95686,0.76078,0.05098}%
\definecolor{mycolor5}{rgb}{0.85882,0.19608,0.21176}%
\definecolor{mycolor1}{rgb}{0.28235,0.52157,0.92941}%
\definecolor{mycolor4}{rgb}{1.00000,0.54902,0.00000}%

\begin{axis}[
width=0.71\linewidth,
height=0.4\linewidth,
scale only axis,
scaled ticks=false, 
grid=both,
grid style={line width=.1pt, draw=gray!20},
major grid style={line width=.2pt,draw=gray!50},
xmin=0, xmax=7,
xtick={1, 3, 4, 5, 6, 4.5},
xticklabel style={align=center}, 
xticklabels={
    {{\\[1.5ex]E2E}},      
    {{$1$\\[1.5ex]}},        
    {{$10$\\[1.5ex]}},
    {{$50$\\[1.5ex]}},
    {{$200$\\[1.5ex]}},
    {{\\[1.5ex]DRIS}}
},
ylabel={success rate},
ytick={0.6, 0.7, 0.8, 0.9},
xmajorgrids=false,
xminorgrids=false,
ymajorgrids=true,
yminorgrids=true,
minor y tick num=4,
ymin=0.6, ymax=0.915,
axis x line*=bottom,
axis y line*=left,
xtick style={draw=none},
legend style={legend cell align=left, align=left, legend columns = 2, fill=none, draw=none, at={(0.05, 0.75)},anchor=west}
]
\draw[draw=none,fill=mycolor1, fill opacity=0.95] (axis cs:0.6,0) rectangle (axis cs:1.4,0.685416666666667);

\draw[draw=none,fill=mycolor2, fill opacity=0.95] (axis cs:2.6,0) rectangle (axis cs:3.4,0.766927083333333);

\draw[draw=none,fill=mycolor3, fill opacity=0.95] (axis cs:3.6,0) rectangle (axis cs:4.4,0.788020833333333);

\draw[draw=none,fill=mycolor4, fill opacity=0.95] (axis cs:4.6,0) rectangle (axis cs:5.4,0.7578125);

\draw[draw=none,fill=mycolor5, fill opacity=0.95] (axis cs:5.6,0) rectangle (axis cs:6.4,0.827864583333333);

\path [draw=black, very thick]
(axis cs:1,0.640690151549637)
--(axis cs:1,0.730143181783696);

\path [draw=black, very thick]
(axis cs:3,0.644352413453376)
--(axis cs:3,0.889501753213291);

\path [draw=black, very thick]
(axis cs:4,0.770144789269031)
--(axis cs:4,0.805896877397636);

\path [draw=black, very thick]
(axis cs:5,0.721346726949554)
--(axis cs:5,0.794278273050446);

\path [draw=black, very thick]
(axis cs:6,0.787473966728456)
--(axis cs:6,0.868255199938211);

\end{axis}

\end{tikzpicture}

%% file: includes/expreal.tex
\section{Real-Robot Deployment}
\label{sec:expreal}

To more realistically challenge our learned policies under real-world uncertainties and evaluate their sim-to-real transfer performance,
we directly deployed the policies trained purely in simulation in a zero-shot manner on a real 7-DoF Franka Research 3 (FR3) robot manipulator,
without any fine-tuning using real-world data.

\begin{figure}[t]
    \centering
    \includegraphics[width=\linewidth]{figs/setup.pdf}
    \caption{
        The system setup (top) and the four different balls (bottom) used for real-world experiments.
    }
    \label{fig:setup}
\end{figure}

\begin{figure}[t]
    \centering
    \begin{minipage}[b]{\linewidth}
        \includegraphics[width=\linewidth]{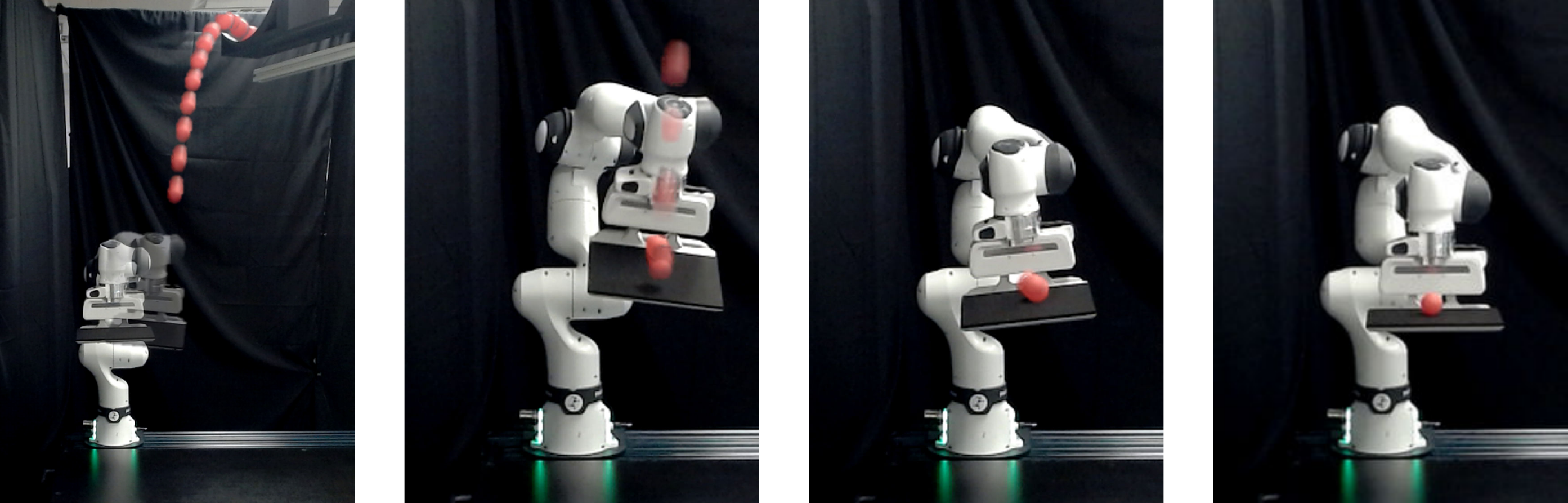}
        \vspace{1pt}
    \end{minipage}

    \begin{minipage}[b]{\linewidth}
    \footnotesize
    \setlength{\tabcolsep}{4pt}
    \centering
    \begin{tabular}{c | c || P{1.55cm} P{1.55cm} P{1.55cm}}
        \hline
        Ball & Ramp & VelTrack & E2E & DRIS (Ours)\\
        \hline
        \multirow{3}{*}{Wiffle} & $R = 0.13\, m$ & 0\phantom{0} / \phantom{0}5 & 0\phantom{0} / \phantom{0}5 & \textbf{5\phantom{0} / \phantom{0}5} \\
                                & $R = 0.20\, m$ & 0\phantom{0} / \phantom{0}5 & 2\phantom{0} / \phantom{0}5 & \textbf{4\phantom{0} / \phantom{0}5} \\
                                & $R = 0.32\, m$ & 0\phantom{0} / \phantom{0}5 & 0\phantom{0} / \phantom{0}5 & \textbf{3\phantom{0} / \phantom{0}5} \\
        \hline
        \multirow{3}{*}{Rubber} & $R = 0.13\, m$ & 0\phantom{0} / \phantom{0}5 & 0\phantom{0} / \phantom{0}5 & \textbf{2\phantom{0} / \phantom{0}5} \\
                                & $R = 0.20\, m$ & 0\phantom{0} / \phantom{0}5 & 0\phantom{0} / \phantom{0}5 & \textbf{5\phantom{0} / \phantom{0}5} \\
                                & $R = 0.32\, m$ & 0\phantom{0} / \phantom{0}5 & 1\phantom{0} / \phantom{0}5 & \textbf{2\phantom{0} / \phantom{0}5} \\
        \hline
        \multirow{3}{*}{Ping-Pong} & $R = 0.13\, m$ & 0\phantom{0} / \phantom{0}5 & 1\phantom{0} / \phantom{0}5 & \textbf{4\phantom{0} / \phantom{0}5} \\
                                   & $R = 0.20\, m$ & 2\phantom{0} / \phantom{0}5 & 1\phantom{0} / \phantom{0}5 & \textbf{5\phantom{0} / \phantom{0}5} \\
                                   & $R = 0.32\, m$ & 0\phantom{0} / \phantom{0}5 & 0\phantom{0} / \phantom{0}5 & \textbf{4\phantom{0} / \phantom{0}5} \\
        \hline
        \multirow{3}{*}{Foam} & $R = 0.13\, m$ & 0\phantom{0} / \phantom{0}5 & 0\phantom{0} / \phantom{0}5 & \textbf{2\phantom{0} / \phantom{0}5} \\
                              & $R = 0.20\, m$ & 1\phantom{0} / \phantom{0}5 & 1\phantom{0} / \phantom{0}5 & \textbf{3\phantom{0} / \phantom{0}5} \\
                              & $R = 0.32\, m$ & 0\phantom{0} / \phantom{0}5 & \textbf{2\phantom{0} / \phantom{0}5} & \textbf{2\phantom{0} / \phantom{0}5} \\
        \hline
        \multicolumn{2}{c ||}{Total} & 3\phantom{0} / 60 & 8\phantom{0} / 60 & \textbf{41 / 60}\\
        \multicolumn{2}{c ||}{Success Rate} & 0.05 & 0.13 & \textbf{0.68}\\
        \hline
    \end{tabular}
    \end{minipage}
    
    \caption{
    \emph{Top:} Snapshots of our policy catching a ping-pong ball released from the ramp.
    \emph{Bottom:} Success rates for each policy across all combinations of ball types and ramp settings.
    }
    \label{fig:eval_real}
\end{figure}

\subsection{System Setup}

Fig.~\ref{fig:setup} shows the setup we used for real-world evaluation (top),
and the four test balls (wiffle, rubber, ping-pong, and foam) which differ in size, weight, and physical behavior (bottom).
The plate is 3D-printed and covered with a neoprene foam padding to protect the robot from damage during repeated trials.
Although the padding slightly reduces the impact force,
its surface is smoother than the underlying plastic and causes the ball to roll and accelerate more easily, thus preserving the difficulty of the task.

To ensure relatively consistent initial ball states for fair comparative evaluation across different policies,
we designed and 3D-printed a ramp with interchangeable lower sections and mounted it above the workspace.
The ball is released by rolling from the top of the ramp,
and the three interchangeable sections (with radii $R = 0.13$, $0.20$, and $0.32\, m$) induce different release speeds,
allowing evaluation under varying initial ball states.

\subsection{Quantitative Evaluation}

We deployed a single DRIS-trained policy (with $N=200$) on the real robot across all evaluation trials and compared its performance against two baselines:
1) \emph{VelTrack}: a hand-crafted policy that moves the plate horizontally to follow the ball based on its estimated velocity after the first impact,
with the plate oriented to face the ball's incoming direction at the moment of first impact;
2) \emph{E2E}: similar to the simulation evaluation in Sec.~\ref{sec:expsim}, this is a policy trained end-to-end with a single ball in simulation and directly deployed on the real robot.
The controller gains, i.e., $\bm{K}$ and $\bm{D}$ in Eq.~\eqref{eq:control}, 
were identified on the real robot by matching the controller's behavior to that in simulation.
Aside from this gain identification,
no additional adaptation or fine-tuning was performed.

For each policy, and for every combination of ball type and ramp,
we conducted $5$ trials and evaluated performance based on the number of successful catches, as summarized in Fig.~\ref{fig:eval_real}.
A trial is considered a successful catch if the robot keeps the ball on the plate for at least $10$ seconds after it begins moving.
The VelTrack baseline rarely succeeded,
as its response was not sufficiently reactive,
particularly under inaccurate ball velocity estimates.
Its only $3$ successful trials occurred when the ball was accidentally trapped in the gap between the plate and the finger.
The E2E baseline, although producing reasonable motions to contact the ball,
was highly sensitive to observation noise, 
leading to jerky robot movements that frequently knocked the ball off the plate or hit the robot's power limit.
In contrast, our DRIS-trained policy achieved a $68\%$ success rate when deployed zero-shot, and successfully caught all tested balls.
While the system maintains high performance across diverse conditions, 
the few remaining failure cases typically occur when the initial contact cannot sufficiently dissipate the ball's momentum or redirect the ball unfavorably due to tracking latency or noise.
In these instances, the robot may occasionally over-stretch while attempting to recover the high-speed trajectory.

\begin{figure*}[t]
    \centering
    \begin{tikzpicture}
        \node[anchor=south west,inner sep=0] at (0,0){\includegraphics[width=\linewidth]{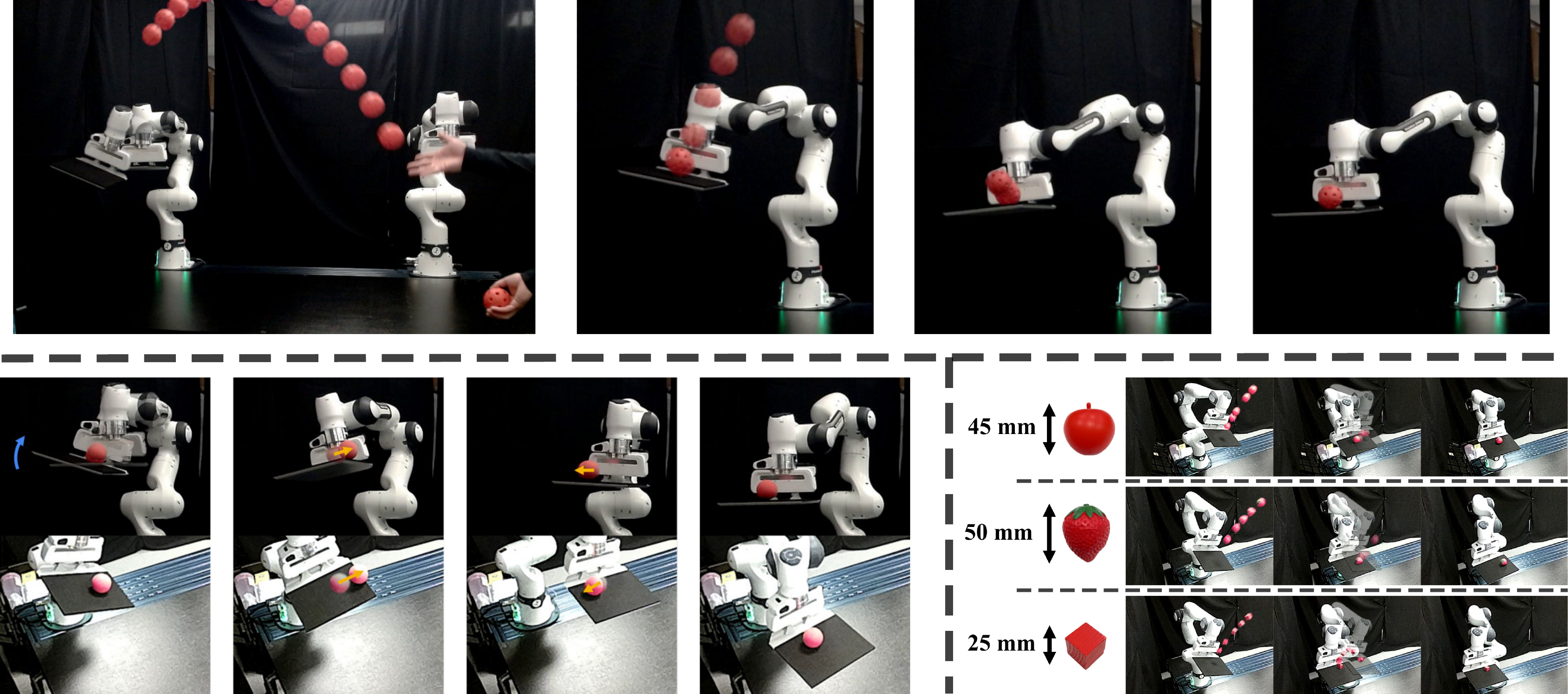}};
        \node[anchor=west, align=left, text=white, font=\scriptsize] at (0, 3.4) {View 1};
        \node[anchor=west, align=left, text=white, font=\scriptsize] at (0, 0.2) {View 2};
    \end{tikzpicture}
    \caption{
        Our policy successfully catches a wiffle ball thrown by a human (top); 
        balances a foam ball (bottom left), where a hard-coded motion rotates the plate by $30^{\circ}$ (blue arrow) to initiate the ball's rolling in the first frame; 
        and catches irregularly shaped objects (bottom right).
    }
    \label{fig:qual}
\end{figure*}

\subsection{Qualitative Evaluation}

We conducted several qualitative evaluations to further assess the learned policy.
First, as showcased in Fig.~\ref{fig:qual} (top),
the policy successfully catches all test balls thrown by a human, 
despite the increased uncertainty and inconsistency of human throws.
Second, as illustrated in Fig.~\ref{fig:qual} (bottom left),
although trained for reactive catching, 
the policy generalizes directly to a pure balancing task.
Third, as shown in Fig.~\ref{fig:qual} (bottom right),
the policy was able to catch irregularly shaped objects, including a lightweight toy apple, a heavier toy strawberry, and a wooden cube.
The wooden cube is caught only occasionally due to its highly unpredictable impact dynamics, 
which makes reliable catching difficult even for humans.

%% file: includes/conclusion.tex
\section{Conclusion and Limitations}
\label{sec:conclusion}

In this work, we propose a DRIS-based learning strategy that enables robust policy learning in simulation and improved zero-shot sim-to-real transfer for dexterous manipulation under severe physical uncertainty.
By simultaneously propagating multiple randomized instances instead of one, DRIS better approximates state evolution under diverse physical conditions.
Combined with a task-specific policy design, this enables stable reactive catching without passive mechanical aids.
Extensive experiments show substantially improved robustness and sim-to-real performance over conventional baselines.

\emph{Limitations.}
While the proposed DRIS-based strategy is promising, we acknowledge several limitations to be addressed in future work.
First, DRIS requires an encoder to map states of varying instances to a fixed-dimensional latent representation; 
while this is efficient for low-dimensional states, tasks involving high-dimensional visual or geometric inputs may require larger models and substantial pre-training. 
Second, effective DRIS training relies on efficient parallel propagation of multiple instances,
which may be computationally expensive or impractical without suitable simulator support. 
This is particularly relevant when scaling to complex multi-body contact settings involving multiple mutual interactions, where the computational overhead of resolving simultaneous collision manifolds across the entire DRIS can become a significant bottleneck.
Finally, since all instances share a single action, excessive divergence among instance states can destabilize learning, placing a practical limit on the amount of variation introduced.
As noted in~\cite{schaal1993open}, preventing such divergence is easier in dissipative systems. 
While our method thrives in ``energy-dissipating'' tasks like catching, 
it may struggle in ``energy-injecting'' tasks like throwing. 
In such expansionary dynamics, physical variations are rapidly amplified, making it difficult for a shared action to reconcile the diverging states. 
Future work using adaptive or curriculum-based instance randomization during training might mitigate this challenge.

%% file: includes/appendix_alg.tex
\clearpage
\setcounter{section}{0}
\renewcommand{\thesection}{Appendix \Alph{section}}
\section{Implementation Details}
\label{appendix:alg}

The learning pipeline and task simulation were executed on a single NVIDIA GeForce RTX~3060 GPU with 12~GB of memory.
The actions are generated at 20~Hz, corresponding to a simulation step of 1/20~second.
The plate size (half-length) is set to $l_e = 12 \, cm$.
The domain-randomized parameters were uniformly sampled from the following ranges: 
ball radius $\left[2, \, 4\right] cm$; static and dynamic friction coefficients $\left[0, \, 0.1 \right]$; and restitution coefficient $\left[0.4, \, 0.7 \right]$.
The latent feature dimension of encoded DRIS was set to $d_z = 64$, and the decay coefficient in the reward of Eq.~\eqref{eq:reward} was $\eta=0.25$.

When deploying the trained policy in the real world,
the ball's 3D trajectory is estimated using two cameras at $80$~FPS through image-based ball detection (e.g., color segmentation followed by contour-based circle fitting), triangulation, and parabolic curve fitting.
The ball's velocity is estimated via ordinary least squares (OLS) by fitting a line to position measurements over a $0.1$-second sliding window.
The trained policies were loaded on a 3.4 GHz AMD Ryzen 9 5950X CPU for real-time inference.

%% file: includes/appendix.tex
\section{Theoretical Analysis of DRIS}
\label{appendix:math}

In this appendix, we provide a theoretical analysis underlying Theorem~\ref{theorem:dris}.
Specifically, we contrast the conventional Domain Randomization (DR) with our proposed DRIS under a unified problem setup (~\ref{appendix:a});
we interpret DRIS as an exact particle approximation for the system belief propagation (~\ref{appendix:b});
then, we analyze to show that DRIS yields a more stable optimization (~\ref{appendix:c}), more robust policies (~\ref{appendix:d}), and improved sim-to-real generalization (~\ref{appendix:e}),
followed by a brief discussion on the connection to our practical implementation (\ref{appendix:f}).

\subsection{Unified Problem Setup}
\label{appendix:a}

To ensure this appendix is self-contained, we briefly recapitulate the problem setup and notations from Sec.~\ref{sec:method} in the main text.
The manipulation problem is defined over an observable state space $\mathcal{S}$ and an action space $\mathcal{A}$.
We denote a state by $\bm{s} \in \mathcal{S}$ and an action by $\bm{a} \in \mathcal{A}$.
The system state evolves according to a deterministic transition function $\bm{s}_{t+1} = f\left( \bm{s}_t, \, \bm{a}_t, \, \bm{c} \right)$,
conditioned on the physical parameters $\bm{c} \in \mathcal{C}$ (e.g., friction, mass).
The parameters $\bm{c}$ are unknown during execution but are assumed to be distributed according to a fixed prior $p\left(\bm{c} \right)$ (which is simplified to a uniform distribution in our implementation).

\emph{Ensemble State with Shared Actions.}
In the proposed \textbf{Domain-Randomized Instance Set (DRIS)} framework, 
during policy learning,
we sample a set of $N$ i.i.d. physical instances drawn from the prior, 
denoted $\hat{\mathcal{C}} = \left\{ \bm{c}^{(1)}, \dots, \bm{c}^{(N)} \right\}$ where each $\bm{c}^{(i)} \sim p(\bm{c})$,
The ensemble state of these $N$ parallel instances is denoted as $\mathcal{S}_t = \left\{ \bm{s}_t^{(1)}, \dots, \bm{s}_t^{(N)} \right\} \in \mathcal{S}^N$.
At each time step $t$, the policy $\pi_{\bm{\theta}}$ (parameterized by $\bm{\theta}$) generates a single consensus action $\bm{a}_t = \pi_{\bm{\theta}} \left( \mathcal{S}_t \right)$ that is applied to all $N$ parallel instances simultaneously.
This introduces an explicit coupling:
The state transition of instance $i$ depends on the states of all other instances via the shared action:
\begin{equation}
    \bm{s}_{t+1}^{(i)} = f\left( \bm{s}_t^{(i)}, \, \pi_{\bm{\theta}}\left(\mathcal{S}_t \right), \, \bm{c}^{(i)} \right), \quad \forall i \in \{1, \dots, N\}
\end{equation}

\emph{The Global Objective.}
With all instances starting from the same initial state $\bm{s}_0 \sim p\left(\bm{s}_0\right)$,
the controlled trajectory of the $i$-th instance is $\tau_{\bm{c}^{(i)}} = \left(\bm{s}_0, \bm{a}_0, \bm{s}_1^{(i)}, \bm{a}_1, \dots, \bm{s}_T^{(i)} \right)$.
We denote $\tau_{\bm{c}^{(i)}}$ to emphasize that the trajectory is implicitly conditioned on the specific realization of the physical parameter $\bm{c}^{(i)}$.
The \emph{Discounted Cumulative Reward} (Return) of a single trajectory is then defined as:
\begin{equation}
\label{eq:R}
   R\left( \tau_{\bm{c}^{(i)}} \right) \coloneqq \sum_{t=0}^{T-1} \gamma^t \, r\left( \bm{s}^{(i)}_t, \, \bm{a}_t \right)
\end{equation}
The global optimization objective with respect to the policy parameter $\bm{\theta}$ is to maximize the expected average return over all instance trajectories:  
\begin{equation}
    \mathcal{J}_N \left( \bm{\theta} \right) \coloneqq \mathbb{E}_{\bm{s}_0 \sim p\left( \bm{s}_0 \right), \bm{c}^{(i)} \sim p\left( \bm{c} \right)} \left[ \frac{1}{N} \sum_{i=1}^N R\left( \tau_{\bm{c}^{(i)}} \right) \right]
\end{equation}

\textbf{Relationship to Conventional Domain Randomization (DR).}
By setting the DRIS size to $N = 1$, the ensemble state collapses to a singleton $\mathcal{S}_t = \left\{ \bm{s}_t \right\}$ and the unified objective $\mathcal{J}_N$ reduces to:
\begin{equation}
    \mathcal{J}_1 \left( \bm{\theta} \right) = \mathbb{E}_{\bm{s}_0 \sim p\left( \bm{s}_0 \right), \bm{c} \sim p\left( \bm{c} \right)} \left[ R\left(\tau_{\bm{c}} \right) \right]
\end{equation}
This recovers the conventional DR objective, where the expectation is estimated via Monte Carlo sampling of a single physics parameter per episode.

\subsection{Interpretation: Exact Particle Propagation}
\label{appendix:b}

We prove that the simulation step in DRIS corresponds to the exact propagation of a particle approximation of the system's belief state.

\textbf{Continuous Belief Dynamics.}
Let $b_t \left(\bm{s}, \bm{c} \right)$ be the joint probability density (belief) of the state and physics parameters at time $t$. 
Given a deterministic action $\bm{a}_t$, 
the belief evolves according to the \textbf{Liouville Equation} (the continuity equation for probability mass):
\begin{equation}
\label{eq:liouville}
\begin{aligned}
    b_{t+1}\left(\bm{s}', \bm{c}'\right)
    =
    \int_{\mathcal{S}}
    \int_{\mathcal{C}}
    & \delta \Big(
        \bm{s}' - f\left(\bm{s}, \bm{a}_t, \bm{c}\right)
    \Big)
    \\
    &
    \cdot
    \delta \left(\bm{c}' - \bm{c}\right)
    \, b_t \left(\bm{s}, \bm{c}\right)
    \, d\bm{c} \, d\bm{s}
\end{aligned}
\end{equation}
Here, $\delta\left(\cdot \right)$ is the Dirac delta function. 
The term $\delta\left(\bm{c}' - \bm{c} \right)$ enforces that physics parameters are static during propagation.

\begin{proposition}[Exact Particle Propagation]
Let the initial belief be approximated by an empirical measure $\hat{b}_{t, N}\left(\bm{s}, \bm{c}\right) = \frac{1}{N} \sum_{i=1}^N \delta (\bm{s} - \bm{s}_t^{(i)} ) \delta(\bm{c} - \bm{c}^{(i)})$. 
The time evolution of this measure under the Liouville Equation is exactly the empirical measure constructed from the updated particles $\bm{s}_{t+1}^{(i)} = f\left(\bm{s}_t^{(i)}, \, \bm{a}_t, \, \bm{c}^{(i)} \right)$.
\end{proposition}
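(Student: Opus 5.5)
The plan is to substitute the empirical measure $\hat{b}_{t,N}$ directly into the Liouville update \eqref{eq:liouville}. Then we use linearity of the integral together with the sifting property of the Dirac delta to collapse the double integral, term by term. Because the right-hand side of \eqref{eq:liouville} is linear in $b_t$, the sum $\frac{1}{N}\sum_{i=1}^N$ can be pulled outside the integral. This reduces the claim to computing, for a single particle $i$,
\begin{equation*}
I_i(\bm{s}',\bm{c}') = \int_{\mathcal{S}}\int_{\mathcal{C}} \delta\big(\bm{s}' - f(\bm{s},\bm{a}_t,\bm{c})\big)\,\delta(\bm{c}'-\bm{c})\,\delta(\bm{s}-\bm{s}_t^{(i)})\,\delta(\bm{c}-\bm{c}^{(i)})\,d\bm{c}\,d\bm{s}.
\end{equation*}

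First I would integrate over $\bm{c}$ using $\delta(\bm{c}-\bm{c}^{(i)})$. This replaces $\bm{c}$ by $\bm{c}^{(i)}$ everywhere and leaves the factors $\delta(\bm{c}'-\bm{c}^{(i)})$ and $\delta(\bm{s}'-f(\bm{s},\bm{a}_t,\bm{c}^{(i)}))$. Next I would integrate over $\bm{s}$ using $\delta(\bm{s}-\bm{s}_t^{(i)})$, which gives
\begin{equation*}
I_i = \delta\big(\bm{s}' - f(\bm{s}_t^{(i)},\bm{a}_t,\bm{c}^{(i)})\big)\,\delta(\bm{c}'-\bm{c}^{(i)}) = \delta(\bm{s}'-\bm{s}_{t+1}^{(i)})\,\delta(\bm{c}'-\bm{c}^{(i)}).
\end{equation*}
Summing over $i$ with weights $1/N$ then yields exactly $\hat{b}_{t+1,N}$. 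This is the empirical measure of the propagated DRIS $\mathcal{D}_{t+1}=\mathcal{F}(\mathcal{D}_t,\bm{a}_t)$ from \eqref{eq:dris_prop}, with the same weights and unchanged parameters. Because the step incurs no resampling or weight update, the approximation error is not increased by the propagation step; that is the sense of ``exact''.

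The main obstacle is rigor in the delta manipulations. A product such as $\delta(\bm{s}'-f(\bm{s},\bm{a}_t,\bm{c}))$, integrated against another delta, is only meaningful as a measure, not as a pointwise density. To avoid this, I would rephrase \eqref{eq:liouville} as the pushforward $b_{t+1} = (\Phi_{\bm{a}_t})_\# b_t$ under the map $\Phi_{\bm{a}_t}(\bm{s},\bm{c}) = (f(\bm{s},\bm{a}_t,\bm{c}),\bm{c})$. I would verify this by testing against a bounded measurable $\varphi$: the identity $\int \varphi\, db_{t+1} = \int \varphi\circ\Phi_{\bm{a}_t}\, db_t$ is exactly what \eqref{eq:liouville} encodes. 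The pushforward of a Dirac mass is a Dirac mass at the image, $(\Phi)_\#\delta_{x}=\delta_{\Phi(x)}$, and pushforward is linear. Together these give the result immediately, requiring only measurability of $f$ and no smoothness or invertibility. Finally, I would remark that since the action $\bm{a}_t=\pi_{\bm{\theta}}(\mathcal{S}_t)$ is shared and fixed at time $t$, $\Phi_{\bm{a}_t}$ is the same map for all particles, so the coupling through the shared action does not affect the argument.
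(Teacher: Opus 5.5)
Your proof is correct and follows the paper's argument: substitute $\hat{b}_{t,N}$ into the Liouville equation, pull $\frac{1}{N}\sum_i$ outside by linearity, and collapse the integrals with the sifting property of the Dirac delta to obtain $\frac{1}{N}\sum_i \delta(\bm{s}'-\bm{s}_{t+1}^{(i)})\,\delta(\bm{c}'-\bm{c}^{(i)})$. Your pushforward reformulation $b_{t+1}=(\Phi_{\bm{a}_t})_\# b_t$ is a worthwhile rigorization that the paper omits, since it avoids products of deltas and needs only measurability of $f$; however, your aside that the approximation error ``is not increased'' should be read as ``i.i.d.\ draws from $b_t$ map to i.i.d.\ draws from $b_{t+1}$,'' because in metrics such as Wasserstein an expansive $f$ can amplify the discrepancy.
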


\begin{proof}
Substituting $\hat{b}_{t,N}$ into Eq.~\eqref{eq:liouville} and utilizing the sifting property of the Dirac delta $\int f(x)\delta(x-x_0)dx = f(x_0)$:
\begin{equation}
\begin{aligned}
    \hat{b}_{t+1}\left(\bm{s}', \bm{c}'\right) 
    &= \iint \delta\Big(\bm{s}' - f\left(\bm{s}, \bm{a}_t, \bm{c}\right) \Big) \, \delta(\bm{c}' - \bm{c}) \\
    &\quad \quad \left[ \frac{1}{N} \sum_{i=1}^N \delta(\bm{s} - \bm{s}_t^{(i)}) \delta(\bm{c} - \bm{c}^{(i)}) \right] d\bm{s} \, d\bm{c} \\
    &= \frac{1}{N} \sum_{i=1}^N \iint \delta\Big(\bm{s}' - f\left(\bm{s}, \bm{a}_t, \bm{c}\right) \Big) \, \delta(\bm{c}' - \bm{c})\\
    & \quad \quad \delta(\bm{s} - \bm{s}_t^{(i)}) \, \delta(\bm{c} - \bm{c}^{(i)}) \, d\bm{s} \, d\bm{c} \\
    &= \frac{1}{N} \sum_{i=1}^N \delta\Big( \bm{s}' - f\left(\bm{s}_t^{(i)}, \bm{a}_t, \bm{c}^{(i)}\right) \Big) \, \delta\left(\bm{c}' - \bm{c}^{(i)}\right)
\end{aligned}
\end{equation}
This resulting distribution corresponds precisely to the discrete set of updated simulation states. 
Thus, DRIS is mathematically equivalent to propagating the belief state through the exact system dynamics.
\end{proof}

\subsection{Gradient Variance Reduction}
\label{appendix:c}

We analyze the variance of the gradient estimator $\nabla_{\bm{\theta}} \mathcal{J}_N \left(\bm{\theta}\right)$.
In particular, assuming the policy is permutation invariant (i.e., the ordering of instances in the DRIS does not affect the policy output),
we show that DRIS yields a reduced gradient variance compared to conventional DR.

\begin{definition}[Permutation Invariant Policy]
A policy $\pi_{\bm{\theta}}$ is permutation invariant if for any permutation $\phi$ of indices $\{1, \dots, N\}$ and any ensemble state $\left\{ \bm{s}^{(i)}, \dots, \bm{s}_{(N)} \right\} \in \mathcal{S}^N$:
\begin{equation}
    \pi_{\bm{\theta}}\left( \left\{ \bm{s}^{\left(\phi(1)\right)}, \dots, \bm{s}^{\left(\phi(N)\right)} \right\} \right) = \pi_{\bm{\theta}}\left( \left\{ \bm{s}^{(1)}, \dots, \bm{s}^{(N)} \right\} \right)
\end{equation}
\end{definition}

\begin{lemma}[Gradient Exchangeability]
\label{lemma:grad_exchange}
Let $\bm{g}_i \triangleq \nabla_{\bm{\theta}} R\left(\tau_{\bm{c}^{(i)}} \right)$ denote the gradient obtained from the $i$-th instance. 
If $\pi_{\bm{\theta}}$ is permutation invariant,
the gradients $\bm{g}_1, \dots, \bm{g}_N$ are exchangeable random variables. 
Consequently, they share a common mean $\bm{\mu}$, a common covariance matrix $\bm{\Sigma}$ and a common cross-covariance matrix $\bm{\Sigma}_{\text{cross}}$:
\begin{equation}
    \text{Cov}\left(\bm{g}_i\right) = \bm{\Sigma}, \quad \text{Cov}\left(\bm{g}_i, \bm{g}_j\right) = \bm{\Sigma}_{\text{cross}} \quad \forall i \ne j
\end{equation}
\end{lemma}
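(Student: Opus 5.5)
The plan is to show that the joint law of $(\bm{g}_1,\dots,\bm{g}_N)$ is invariant under every permutation of indices. Exchangeability then follows by definition, and the common-moment claims follow at once. Throughout, $\bm{\theta}$ is fixed and the only randomness comes from $\bm{s}_0 \sim p(\bm{s}_0)$ and the i.i.d. parameters $\bm{c}^{(1)},\dots,\bm{c}^{(N)} \sim p(\bm{c})$ (plus any policy sampling noise, which we assume does not depend on instance labels). The proof needs the observation that each $\bm{g}_i$ is a deterministic, measurable function of these primitives. We write
\begin{equation*}
\bm{g}_i = G\bigl(\bm{s}_0,\, \bm{c}^{(i)},\, \{\bm{c}^{(j)}\}_{j=1}^N\bigr)
\end{equation*}
for a single function $G$ that is the same for every $i$.

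\textbf{Step 1: the rollout is label-symmetric.} By induction on $t$, we show that relabelling the parameters by a permutation $\phi$ relabels the instance states by the same $\phi$. The base case holds because all instances start at the same $\bm{s}_0$. For the inductive step, suppose that at time $t$ the relabelled ensemble is $\{\bm{s}_t^{(\phi(i))}\}$. By the permutation invariance definition above, $\pi_{\bm{\theta}}$ outputs the same shared action $\bm{a}_t$. The per-instance transition $f(\cdot,\bm{a}_t,\bm{c})$ is the same map for every instance, so the states at $t+1$ are again the $\phi$-relabelled ones. Consequently $\tau_{\bm{c}^{(i)}}$ depends on $i$ only through $\bm{c}^{(i)}$ and the unordered multiset $\hat{\mathcal{C}}$. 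The same holds for $R(\tau_{\bm{c}^{(i)}})$ from Eq.~\eqref{eq:R}. Differentiating in $\bm{\theta}$ preserves this, because permutation invariance holds for every $\bm{\theta}$ and so also holds for $\nabla_{\bm{\theta}}\pi_{\bm{\theta}}$. This gives the common function $G$.

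\textbf{Step 2: distributional invariance.} Since the $\bm{c}^{(i)}$ are i.i.d. and independent of $\bm{s}_0$, the vector $(\bm{s}_0,\bm{c}^{(\phi(1))},\dots,\bm{c}^{(\phi(N))})$ has the same law as $(\bm{s}_0,\bm{c}^{(1)},\dots,\bm{c}^{(N)})$. Applying the measurable map $G$ componentwise, Step 1 shows that $(\bm{g}_{\phi(1)},\dots,\bm{g}_{\phi(N)})$ is the image of the permuted primitives under the same map that sends the original primitives to $(\bm{g}_1,\dots,\bm{g}_N)$. The two vectors therefore have the same distribution, which is exchangeability.

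\textbf{Step 3: moments.} Exchangeability implies that all one-dimensional marginals coincide, so $\mathbb{E}[\bm{g}_i]=\bm{\mu}$ and $\text{Cov}(\bm{g}_i)=\bm{\Sigma}$ for every $i$. It also implies that all two-dimensional marginals for pairs $i\ne j$ coincide, since any ordered pair of distinct indices can be mapped to any other by a permutation. Hence $\text{Cov}(\bm{g}_i,\bm{g}_j)=\bm{\Sigma}_{\text{cross}}$ for all $i \ne j$. These statements require second moments to exist, so we assume $\bm{g}_i$ is square-integrable, for example because the rewards and policy derivatives are bounded.

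The main obstacle is Step 1, specifically justifying that the \emph{gradient} inherits the symmetry. Permutation invariance of the forward map is given, but the backpropagated gradient through the shared action couples all instances. The argument therefore has to make explicit that $\bm{g}_i$ depends on the other instances only symmetrically. I would handle this by viewing $\bm{\theta}\mapsto R(\tau_{\bm{c}^{(i)}})$ as a composition of $f$ and $\pi_{\bm{\theta}}$, where $\pi_{\bm{\theta}}$ is evaluated only on unordered sets. Identical functions of $\bm{\theta}$ then have identical derivatives. If stochastic policies or score-function estimators are used instead of pathwise gradients, the same argument applies after adding the action noise to the primitives. That noise is shared across instances and so is trivially symmetric.
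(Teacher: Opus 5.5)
Your proposal is correct and follows the same route as the paper. Both arguments derive invariance of the joint law of $(\bm{g}_1,\dots,\bm{g}_N)$ under relabelling from the i.i.d. parameters and the permutation-invariant policy, then read off common one- and two-dimensional marginals. Your version spells out what the paper asserts in one line: the relabelling induction on the rollout, the pushforward argument, the square-integrability needed for the moments, and ordered (not merely unordered) pairs, which are what make $\bm{\Sigma}_{\text{cross}}$ well defined.
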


\begin{proof}
Let $\bm{G} = \left(\bm{g}_1, \dots, \bm{g}_N\right)$ be the joint vector of gradients. 
Due to the i.i.d. sampling of physics parameters and the permutation invariance of $\pi_{\bm{\theta}}$, the joint distribution of $\bm{G}$ is invariant under any permutation $\phi$: $p\left(\bm{g}_1, \dots, \bm{g}_N\right) = p\left(\bm{g}_{\phi(1)}, \dots, \bm{g}_{\phi(N)}\right)$.
Hence, $\bm{g}_1, \dots, \bm{g}_N$ are exchangeable.
Exchangeability implies identical marginals, so $\mathbb{E}[\bm{g}_i] = \bm{\mu}$ and $\text{Cov}(\bm{g}_i) = \mathbb{E}[(\bm{g}_i - \bm{\mu})(\bm{g}_i - \bm{\mu})^T] = \bm{\Sigma}$ for all $i$. 
Moreover, exchangeability implies that all unordered pairs $(\bm{g}_1, \bm{g}_j)$, $i\neq j$, have identical joint distributions, and thus identical cross-covariance $\text{Cov}(\bm{g}_i, \bm{g}_j) = \mathbb{E}[(\bm{g}_i - \bm{\mu})(\bm{g}_j - \bm{\mu})^T] = \bm{\Sigma}_{\text{cross}}$ for all $i \neq j$.
\end{proof}

\begin{proposition}[Variance Reduction]
Let the scalar total variance be the trace of the covariance matrix: $\sigma^2 \triangleq \text{Tr}\left(\bm{\Sigma}\right)$,
and the scalar correlation coefficient be $\rho \triangleq \text{Tr}\left(\bm{\Sigma}_{\text{cross}}\right) / \sigma^2$. 
Then the total variance of the DRIS gradient estimator $\hat{\bm{g}}_{DRIS} = \frac{1}{N}\sum \bm{g}_i$ is given by:
\begin{equation}
    \text{Tr}\left(\text{Cov}\left(\hat{\bm{g}}_{DRIS}\right)\right) = \sigma^2 \left( \rho + \frac{1-\rho}{N} \right)
\end{equation}
Consequently, DRIS ($N > 1$) achieves strict variance reduction compared to conventional DR ($N=1)$ if and only if $\rho < 1$ (i.e., when the aggregate cross-covariance is strictly smaller than the marginal covariance).
\end{proposition}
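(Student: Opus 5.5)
The plan is to expand the covariance of the average directly, using the common moments supplied by Lemma~\ref{lemma:grad_exchange}, and then compare with the $N=1$ case. By bilinearity of covariance,
\begin{equation*}
    \text{Cov}\left(\hat{\bm{g}}_{DRIS}\right) = \frac{1}{N^2}\sum_{i=1}^N \sum_{j=1}^N \text{Cov}\left(\bm{g}_i, \bm{g}_j\right).
\end{equation*}
The double sum splits into $N$ diagonal terms and $N(N-1)$ off-diagonal terms. By Lemma~\ref{lemma:grad_exchange}, each diagonal term equals $\bm{\Sigma}$ and each off-diagonal term equals $\bm{\Sigma}_{\text{cross}}$. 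Hence
\begin{equation*}
    \text{Cov}\left(\hat{\bm{g}}_{DRIS}\right) = \frac{1}{N}\bm{\Sigma} + \frac{N-1}{N}\bm{\Sigma}_{\text{cross}}.
\end{equation*}

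Next I take traces, using linearity of the trace. Substituting $\text{Tr}(\bm{\Sigma}) = \sigma^2$ and $\text{Tr}(\bm{\Sigma}_{\text{cross}}) = \rho\sigma^2$ gives
\begin{equation*}
    \text{Tr}\left(\text{Cov}\left(\hat{\bm{g}}_{DRIS}\right)\right) = \sigma^2\left(\frac{1}{N} + \frac{N-1}{N}\rho\right) = \sigma^2\left(\rho + \frac{1-\rho}{N}\right).
\end{equation*}
This is the claimed formula. For $N=1$ it reduces to $\sigma^2$, which is the conventional DR estimator variance.

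For the comparison, the DRIS variance minus the DR variance equals
\begin{equation*}
    \sigma^2(1-\rho)\left(\frac{1}{N}-1\right).
\end{equation*}
For $N>1$ the factor $\frac{1}{N}-1$ is strictly negative. Assuming $\sigma^2>0$, which is needed for $\rho$ to be defined, the difference is strictly negative if and only if $1-\rho>0$, that is, $\rho<1$. I would also remark that $\rho\le 1$ always holds, by Cauchy--Schwarz applied to $\mathbb{E}\left[(\bm{g}_i-\bm{\mu})^T(\bm{g}_j-\bm{\mu})\right]$. So the only failure of strict reduction is the degenerate case $\rho=1$, in which the $\bm{g}_i$ coincide almost surely.

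The main subtlety is one of interpretation rather than calculation. The $N=1$ quantities $\sigma^2$ and $\rho$ need not match the marginal variance of a single gradient under DRIS, because the shared action couples the instances and so changes the marginal distribution of each $\bm{g}_i$. I would handle this by stating that the comparison is made at a fixed per-instance marginal covariance $\bm{\Sigma}$: equivalently, the reduction factor $\rho+(1-\rho)/N$ is measured relative to one draw with the same marginal. I would make this assumption explicit rather than try to prove it.
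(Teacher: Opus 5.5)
Your proposal is correct and follows the paper's proof essentially verbatim: you expand $\text{Cov}(\hat{\bm{g}}_{DRIS})$ by bilinearity into $N$ diagonal and $N(N-1)$ off-diagonal blocks, substitute $\bm{\Sigma}$ and $\bm{\Sigma}_{\text{cross}}$ from Lemma~\ref{lemma:grad_exchange}, and take the trace. Your extras are also sound and tighten steps the paper treats only informally: the explicit difference $\sigma^2(1-\rho)\left(\frac{1}{N}-1\right)$ actually proves the ``if and only if'', and the Cauchy--Schwarz bound $\rho\le 1$ is correct. So is your caveat that $\bm{\Sigma}$ depends on $N$ through the shared action, so the comparison with $N=1$ presumes a fixed per-instance marginal covariance.
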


\begin{proof}
Using the bilinearity of covariance and the exchangeability of the gradients:
\begin{equation}
\begin{aligned}
    \text{Cov}\left( \hat{\bm{g}}_{DRIS} \right) &= \text{Cov}\left( \frac{1}{N} \sum_{i=1}^N \bm{g}_i \right) \\
    &= \frac{1}{N^2} \left[ \sum_{i=1}^N \text{Cov}\left(\bm{g}_i\right) + \sum_{i \neq j} \text{Cov}\left(\bm{g}_i, \bm{g}_j\right) \right]
\end{aligned}
\end{equation}
Substituting the covariance and cross-covariance matrices defined in Lemma.~\ref{lemma:grad_exchange}:
\begin{equation}
\begin{aligned}
    \text{Cov}\left( \hat{\bm{g}}_{DRIS} \right) 
    &= \frac{1}{N^2} \left[ N \bm{\Sigma} + N(N-1) \bm{\Sigma}_{\text{cross}} \right] \\
    &= \frac{1}{N} \bm{\Sigma} + \frac{N-1}{N} \bm{\Sigma}_{\text{cross}}
\end{aligned}
\end{equation}
Taking the trace (a linear operator) to obtain the scalar total variance:
\begin{equation}
\begin{aligned}
    \text{Tr}\left(\text{Cov}\left(\hat{\bm{g}}_{DRIS}\right)\right) 
    &= \frac{1}{N} \text{Tr}\left(\bm{\Sigma}\right) + \left( 1 - \frac{1}{N} \right) \text{Tr}\left(\bm{\Sigma}_{\text{cross}}\right) \\
    &= \frac{\sigma^2}{N} + \left( 1 - \frac{1}{N} \right) \rho \sigma^2 \\
    &= \sigma^2 \left( \rho + \frac{1-\rho}{N} \right)
\end{aligned}
\end{equation}
In practice, independently sampled physics parameters introduce instance-specific perturbations in rollout trajectories,
so gradients across instances are typically not perfectly aligned and $\rho < 1$ is expected,
yielding strict variance reduction for DRIS ($N > 1$) compared to conventional DR ($N = 1$).
\end{proof}

\textbf{Implication:}
The result above implies that as the DRIS size increases ($N \to \infty$),
the estimator's total variance decreases from the conventional DR baseline $\sigma^2$ to a non-zero floor $\rho \sigma^2$.
Intuitively,
increasing $N$ averages out the instance-specific (uncorrelated) component of gradient variability;
this stabilizes the optimization by suppressing the high-variance noise from single-instance rollouts.

\subsection{Robustness Analysis}
\label{appendix:d}

We analyze how the DRIS objective implicitly promotes robustness against physical uncertainty.
While conventional DR and DRIS optimize the same theoretical objective in expectation, their empirical optimization landscapes differ fundamentally.

In our problem setup, we can define the empirical objective as the average return over the $N$ rollout trajectories of DRIS.
Since physics parameters $\bm{c}^{(i)}$ are sampled i.i.d. from $p\left(\bm{c}\right)$, the rollouts are conditionally i.i.d. across instances given the shared action sequence $\left\{\bm{a}_t\right\}_{t=0}^{T-1}$:
\begin{equation}
    \hat{\mathcal{J}}_N\left(\bm{\theta}\right) \coloneqq \frac{1}{N} \sum_{i=1}^N R\left( \tau_{\bm{c}^{(i)}} \right) = \frac{1}{N} \sum_{i=1}^N \sum_{t=0}^{T-1} \gamma^t \, r\left(\bm{s}_t^{(i)}, \, \bm{a}_t\right)
\end{equation}

We analyze the optimization landscape by decomposing this return into its local contributions at a single time step $t$,
and define empirical local cost (negative reward) as $C_t(\bm{s}) \triangleq -r\left(\bm{s}, \bm{a}_t \right)$.
The global maximization is equivalent to minimizing the discounted sum of empirical local costs:
\begin{equation}
    \hat{\mathcal{J}}_N(\bm{\theta}) = - \sum_{t=0}^{T-1} \gamma^t \, \underbrace{\left( \frac{1}{N} \sum_{i=1}^N C_t(\bm{s}_t^{(i)}) \right)}_{\hat{J}_{N,t}}
\label{eq:local_sum}
\end{equation}
This decomposition simplifies the analysis:
by examining the local cost $\hat{J}_{N,t}$ conditioned on the shared action $\bm{a}_t$,
we can determine how $N$ affects the signal-to-noise ratio of the gradient at each step.

\begin{assumption}[Local Geometry]
\label{assumption:local_cost}
We assume $C_t(\bm{s})$ is twice differentiable and locally convex w.r.t. the state $\bm{s}$ near the mean $\bm{\mu}_{t} \triangleq \mathbb{E}[\bm{s}_t \mid \bm{a}_t]$. 
We approximate $C_t(\bm{s}_t)$ using a second-order Taylor expansion:
\begin{equation}
\label{eq:taylor}
\begin{aligned}
    C_t\left(\bm{s}\right) &\approx C_t(\bm{\mu}_{t}) + \bm{g}_t^\top (\bm{s} - \bm{\mu}_{t}) + \frac{1}{2} (\bm{s} - \bm{\mu}_{t})^\top \mathbf{H}_{t} (\bm{s} - \bm{\mu}_{t})
\end{aligned}
\end{equation}
where $\bm{g}_t = \nabla_{\bm{s}} C_t\left(\bm{\mu}_{t}\right)$ and $ \mathbf{H}_{t} = \nabla^2_{\bm{s}} C_t(\bm{\mu}_t) \succeq 0$ are the Jacobian and Hessian, respectively.
\end{assumption}

Substituting this expansion into $\hat{J}_{N,t}$ and using the cyclic property of the trace operator $\text{Tr}\left(\bm{x}^\top \mathbf{A} \bm{x}\right) = \text{Tr}\left(\mathbf{A}\bm{x}\bm{x}^\top\right)$, we derive the decomposition:
\begin{equation}
\begin{aligned}
    \hat{J}_{N,t} &= \frac{1}{N} \sum_{i=1}^N C_t\left(\bm{s}_t^{(i)}\right) \\
    &\approx \frac{1}{N} \sum_{i=1}^N \Big[ C_t(\bm{\mu}_t) + \bm{g}_t^\top (\bm{s}_t^{(i)} - \bm{\mu}_t)\\
    & \quad + \frac{1}{2} \text{Tr}\left( \mathbf{H}_t (\bm{s}_t^{(i)} - \bm{\mu}_t)(\bm{s}_t^{(i)} - \bm{\mu}_t)^\top \right) \Big] \\
    & = C_t(\bm{\mu}_t) + \bm{g}_t^\top \left( \frac{1}{N} \sum_{i=1}^N (\bm{s}_t^{(i)} - \bm{\mu}_t) \right)\\
    & \quad + \frac{1}{2} \text{Tr}\left( \mathbf{H}_t \left( \frac{1}{N} \sum_{i=1}^N (\bm{s}_t^{(i)} - \bm{\mu}_t)(\bm{s}_t^{(i)} - \bm{\mu}_t)^\top \right) \right)\\
    &= \underbrace{C_t(\bm{\mu}_t)}_{\text{Nominal}} + \underbrace{\bm{g}_t^\top \bar{\bm{\delta}}_{N,t}}_{\text{Linear Noise}} + \underbrace{\frac{1}{2} \mathrm{Tr}\left(\mathbf{H}_t \hat{\bm{\Sigma}}_{N,t}\right)}_{\text{Robustness Signal}}
\end{aligned}
\end{equation}
where $\bar{\bm{\delta}}_{N,t} = \frac{1}{N}\sum_{i=1}^N\left(\bm{s}^{(i)}_t - \bm{\mu}_t\right)$ is the sample mean error;
$\hat{\bm{\Sigma}}_{N,t} =  \frac{1}{N} \sum_{i=1}^N (\bm{s}_t^{(i)} - \bm{\mu}_t)(\bm{s}_t^{(i)} - \bm{\mu}_t)^\top$ is the empirical second moment about the true mean, satisfying $\mathbb{E}\left[\hat{\bm{\Sigma}}_{N,t}\right] = \bm{\Sigma}_t$.

\begin{theorem}[Asymptotic Unmasking]
As the DRIS size $N$ increases, the linear noise is suppressed while the robustness signal concentrates. 
The empirical local cost converges in probability to a regularized objective:
\begin{equation}
    \hat{J}_{N,t} \xrightarrow{P} C_t\left(\bm{\mu}_t\right) + \frac{1}{2}\mathrm{Tr}\left(\mathbf{H}_t \bm{\Sigma}_t\right) \quad \text{as } N \to \infty.
\end{equation}
\end{theorem}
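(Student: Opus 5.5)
Starting from the three-term decomposition just derived, we show that each stochastic term converges in probability to its claimed limit and then combine them. The working assumption is the one stated in the robustness analysis: conditional on the shared action $\bm{a}_t$, the states $\bm{s}_t^{(1)},\dots,\bm{s}_t^{(N)}$ are i.i.d. with mean $\bm{\mu}_t$ and finite covariance $\bm{\Sigma}_t$. Since $\bm{\mu}_t$, $\bm{g}_t$ and $\mathbf{H}_t$ are fixed given $\bm{a}_t$, the nominal term $C_t(\bm{\mu}_t)$ is deterministic and requires no argument. Throughout, the convergence is read as applying to the second-order surrogate of Assumption~\ref{assumption:local_cost}; that is, $\hat{J}_{N,t}$ denotes the right-hand side of the decomposition.

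\emph{Linear noise.} The term $\bar{\bm{\delta}}_{N,t}$ is a sample mean of i.i.d. zero-mean vectors, so $\mathbb{E}[\bar{\bm{\delta}}_{N,t}]=0$ and $\mathrm{Cov}(\bar{\bm{\delta}}_{N,t})=\bm{\Sigma}_t/N$. This gives $\mathbb{E}[(\bm{g}_t^\top\bar{\bm{\delta}}_{N,t})^2]=\bm{g}_t^\top\bm{\Sigma}_t\bm{g}_t/N$, and Chebyshev's inequality yields $\bm{g}_t^\top\bar{\bm{\delta}}_{N,t}\xrightarrow{P}0$ at rate $O_P(N^{-1/2})$. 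This quantifies the sense in which the linear noise is suppressed.

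\emph{Robustness signal.} The matrix $\hat{\bm{\Sigma}}_{N,t}$ is an average of the i.i.d. matrices $(\bm{s}_t^{(i)}-\bm{\mu}_t)(\bm{s}_t^{(i)}-\bm{\mu}_t)^\top$, whose expectation is $\bm{\Sigma}_t$. The weak law of large numbers, applied entrywise, needs only a finite first moment, and $\bm{\Sigma}_t$ finite suffices for this. It gives $\hat{\bm{\Sigma}}_{N,t}\xrightarrow{P}\bm{\Sigma}_t$. The map $\bm{\Sigma}\mapsto\frac12\mathrm{Tr}(\mathbf{H}_t\bm{\Sigma})$ is linear and hence continuous, so the continuous mapping theorem gives $\frac12\mathrm{Tr}(\mathbf{H}_t\hat{\bm{\Sigma}}_{N,t})\xrightarrow{P}\frac12\mathrm{Tr}(\mathbf{H}_t\bm{\Sigma}_t)$. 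If a rate is wanted, we additionally assume finite fourth moments. Chebyshev's inequality then gives $O_P(N^{-1/2})$ fluctuations here as well, and in that case "concentrates" means that the term tends to a generally nonzero constant, which is nonnegative because $\mathbf{H}_t\succeq0$ and $\bm{\Sigma}_t\succeq0$. The linear term, by contrast, tends to zero.

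\emph{Combining.} Convergence in probability is preserved under sums (Slutsky), so the three limits add to the stated limit $C_t(\bm{\mu}_t)+\frac12\mathrm{Tr}(\mathbf{H}_t\bm{\Sigma}_t)$.

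The main obstacle is justifying the i.i.d. structure conditional on $\bm{a}_t$. The action is computed from the whole ensemble, $\bm{a}_t=\pi_{\bm{\theta}}(\mathcal{S}_t)$, and so depends on all instances. For $t\ge1$ it is therefore not independent of the particles, and $\bm{\mu}_t$ itself depends on $N$ through the policy. We handle this as the text preceding the statement does: we condition on the realized action sequence and treat it as fixed, so that the states are deterministic functions of the i.i.d. parameters $\bm{c}^{(i)}$ and are themselves i.i.d. A second point is that the Taylor remainder is neglected. The clean statement is about the quadratic surrogate. If an exact statement is wanted, we would add a bounded third derivative and check that the averaged remainder is $O_P(\mathbb{E}\|\bm{s}-\bm{\mu}_t\|^3)$, which is small when $\bm{\Sigma}_t$ is small.
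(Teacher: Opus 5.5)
Your proposal is correct and follows the paper's proof essentially step for step: the weak law of large numbers sends the linear term $\bm{g}_t^\top\bar{\bm{\delta}}_{N,t}$ to zero, the weak law together with the continuous mapping theorem (for the linear map $\bm{\Sigma}\mapsto\frac12\mathrm{Tr}(\mathbf{H}_t\bm{\Sigma})$) sends the robustness term to $\frac12\mathrm{Tr}(\mathbf{H}_t\bm{\Sigma}_t)$, and the limit is nonnegative because $\mathbf{H}_t\succeq 0$. Your additions are refinements the paper leaves implicit, not a different route: the Chebyshev $O_P(N^{-1/2})$ rates, the explicit summation step, and the caveats that the statement concerns the quadratic surrogate and that the shared actions must be treated as exogenously fixed (strictly, conditioning on actions computed from the particles themselves would not preserve independence, but this is the same idealization the paper adopts when it declares the rollouts conditionally i.i.d.).
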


\begin{proof}
By the Weak Law of Large Numbers, 
the error $\bar{\bm{\delta}}_{N,t} \xrightarrow{P} \bm{0}$, causing the linear noise term $\bm{g}_t^\top \bar{\bm{\delta}}_{N,t}$ to vanish. 
Similarly, $\hat{\bm{\Sigma}}_{N,t} \xrightarrow{P} \bm{\Sigma}_t$. 
By the Continuous Mapping Theorem, 
$\frac{1}{2}\mathrm{Tr}\left(\mathbf{H}_t \hat{\bm{\Sigma}}_{N,t}\right) \xrightarrow{P} \frac{1}{2}\mathrm{Tr}\left(\mathbf{H}_t \bm{\Sigma}_t\right)$. 
Due to local convexity ($\mathbf{H}_t \succeq 0$), this limit is strictly non-negative, acting as a penalty on aleatoric (physics-induced) variance.
\end{proof}

\textbf{Implication.}
Substituting this local result back into Eq.~\eqref{eq:local_sum} proves that DRIS improves the Signal-to-Noise Ratio (SNR) of the global optimization.
\begin{itemize}
    \item \textbf{At $N=1$ (Conventional DR):} 
    The objective is dominated by linear noise $\sum \gamma^t \bm{g}_t^\top \bar{\bm{\delta}}_{1,t}$. 
    The optimizer ``chases'' random first-order fluctuations, which can obscure the variance penalty.
    \item \textbf{At $N \gg 1$ (DRIS):} 
    The linear noise is suppressed by $\mathcal{O}\left(1 / \sqrt{N}\right)$. 
    The global objective behaves as:
    \begin{equation}
        \hat{\mathcal{J}}_N\left(\bm{\theta}\right) \approx - \sum_{t=0}^{T-1} \gamma^t \left[ C_t(\bm{\mu}_t) + \frac{1}{2}\mathrm{Tr}\left(\mathbf{H}_t \bm{\Sigma}_t\right) \right]
    \end{equation}
    This steers the policy to update in directions that reduce the physics-induced variance $\bm{\Sigma}_t$.
\end{itemize}
As a result, the learned policy via DRIS favors state-action regions that are invariant (i.e., robust) to physical uncertainty.

\subsection{Sim-to-Real Generalization}
\label{appendix:e}

We provide a theoretical analysis of the sim-to-real transfer capability of DRIS.
We explicitly account for the distribution shift between the physics in simulation and the true physics of the real world.

Recall that given a fixed initial state $\bm{s}_0$, for physics parameter $\bm{c}$, we let $\tau_{\bm{c}}$ denote the trajectory induced by policy $\pi_{\bm{\theta}}$.
As defined earlier in Eq.~\eqref{eq:R}, the discounted return of the trajectory is $R\left(\tau_{\bm{c}}\right)$.
We equivalently define the trajectory cost as the negative return:
\begin{equation}
    \mathcal{L}(\bm{\theta}, \bm{c})
    \coloneqq
    - R\left(\tau_{\bm{c}}\right)
    =
    - \sum_{t=0}^{T-1} \gamma^t \, r\left(\bm{s}_t, \bm{a}_t\right)
\end{equation}
Maximizing return is therefore equivalent to minimizing trajectory cost.
If rewards are bounded as $\lvert r(\bm{s},\bm{a})\rvert \le r_{\max}$, then
\begin{equation}
    \left\lvert \mathcal{L}(\bm{\theta},\bm{c}) \right\rvert
    \le
    \frac{r_{\max}}{1-\gamma}
    \eqqcolon B
\end{equation}

\textbf{Source vs. Target Distributions.}
We distinguish between two distributions over the physics parameters $\mathcal{C}$:
\begin{itemize}
    \item \textbf{Source Distribution} with density $p_S(\cdot)$: The distribution used in the simulator for learning (e.g., uniform).
    \item \textbf{Target Distribution} with density $p_T(\cdot)$: The unknown true distribution of physical parameters in the real world.
\end{itemize}
We seek to minimize the expected cost in the real world:
\begin{equation}
    \mathcal{J}_T\left(\bm{\theta}\right) \coloneqq \mathbb{E}_{\bm{c} \sim p_T(\bm{c})} \left[\mathcal{L}(\bm{\theta}, \bm{c})\right]
\end{equation}
The DRIS algorithm minimizes the empirical cost drawn from $p_S(\cdot)$, averaged over the $N$ instances inside DRIS:
\begin{equation}
    \hat{\mathcal{J}}_N\left(\bm{\theta}\right) \coloneqq \frac{1}{N} \sum_{i=1}^N \mathcal{L}(\theta, \bm{c}^{(i)}), \quad \text{where } \bm{c}^{(i)} \overset{\text{i.i.d.}}{\sim} p_S\left(\bm{c}\right)
\end{equation}
Let $\mathcal{J}_S\left(\bm{\theta}\right) = \mathbb{E}_{\bm{c} \sim p_S(\bm{c})}\left[\mathcal{L}\left(\bm{\theta}, \bm{c} \right) \right]$ be the expected cost in simulation.

\begin{definition}[Rademacher Complexity]
To quantify the capacity of the policy parameter class $\bm{\Theta}$ to fit random fluctuations in the simulation, 
we introduce the \textbf{empirical Rademacher Complexity} of the induced (trajectory) cost function class $\mathcal{L}_{\bm{\Theta}} = \{ \mathcal{L}(\bm{\theta}, \, \cdot) : \bm{\theta} \in \bm{\Theta} \}$.
Given a sample $\hat{\mathcal{C}} = \{\bm{c}^{(1)}, \dots, \bm{c}^{(N)}\}$, it is defined as:
\begin{equation}
    \hat{\mathfrak{R}}_{\hat{\mathcal{C}}}\left(\mathcal{L}_{\bm{\Theta}}\right) \coloneqq \mathbb{E}_{\bm{\sigma}} \left[ \sup_{\bm{\theta} \in \bm{\Theta}} \frac{1}{N} \sum_{i=1}^N \sigma_i \, \mathcal{L}\left(\bm{\theta}, \bm{c}^{(i)} \right) \right],
\end{equation}
where $\bm{\sigma} = \left(\sigma_1, \dots, \sigma_N \right)$ are independent Rademacher variables with $\mathbb{P}(\sigma_i = 1) = \mathbb{P}(\sigma_i = -1) = \frac{1}{2}$.
The \textbf{expected Rademacher Complexity} is then defined by averaging over the sampling of $\hat{\mathcal{C}}$:
\begin{equation}
\mathfrak{R}_N\left(\mathcal{L}_{\bm{\Theta}} \right) \coloneqq \mathbb{E}_{\hat{\mathcal{C}} \sim p_S(\bm{c})^N} \left[\hat{\mathfrak{R}}_{\hat{\mathcal{C}}}(\mathcal{L}_{\bm{\Theta}})\right].
\end{equation}
\end{definition}

\begin{theorem}[Sim-to-Real Transfer Bound]
For any $\delta \in (0, 1)$, 
with probability at least $1-\delta$ over the draw of DRIS $\hat{\mathcal{C}} \sim p_S(\bm{c})^N$, 
the following bound holds for all policy parameter $\bm{\theta} \in \bm{\Theta}$:
\begin{equation}
\begin{aligned}
    \mathcal{J}_T\left(\bm{\theta}\right) \le \hat{\mathcal{J}}_N\left(\bm{\theta}\right) &+ \underbrace{2\mathfrak{R}_N\left(\mathcal{L}_{\bm{\Theta}}\right) + 2B\sqrt{\frac{\ln(1/\delta)}{2N}}}_{\text{Generalization Gap (Reducible via } N)}\\
    &+ \underbrace{d_{\mathcal{L}_{\bm{\Theta}}}(p_S, \, p_T)}_{\text{Physics Mismatch (Irreducible)}}
\end{aligned}
\end{equation}
where $d_{\mathcal{L}_{\bm{\Theta}}}(p_S, p_T)$ is the Integral Probability Metric (IPM) measuring the discrepancy between source and target distributions over physics parameter (i.e., $\mathcal{C}$) with respect to the cost class induced by the policy family.
\begin{equation}
\label{eq:ipm}
    d_{\mathcal{L}_{\bm{\Theta}}}(p_S, p_T) \coloneqq 
    \sup_{\bm{\theta} \in \bm{\Theta}} \left| \int_{\mathcal{C}} \mathcal{L}(\bm{\theta}, \bm{c})\, \left( p_T(\bm{c}) - p_S(\bm{c}) \right)\, d\bm{c} \right|
\end{equation}
\end{theorem}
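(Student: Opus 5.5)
The bound follows from two inequalities, one per labelled term. First we pass from target to source. Then we pass from the simulated expected cost to its empirical estimate with a standard uniform-convergence (Rademacher) argument.

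\emph{Step 1 (physics mismatch).} For any fixed $\bm{\theta}\in\bm{\Theta}$ we write
\begin{equation*}
\mathcal{J}_T(\bm{\theta}) = \mathcal{J}_S(\bm{\theta}) + \int_{\mathcal{C}} \mathcal{L}(\bm{\theta},\bm{c})\,\bigl(p_T(\bm{c})-p_S(\bm{c})\bigr)\,d\bm{c}.
\end{equation*}
The second term is at most its absolute value, which in turn is at most the supremum over $\bm{\Theta}$, that is, $d_{\mathcal{L}_{\bm{\Theta}}}(p_S,p_T)$ as in Eq.~\eqref{eq:ipm}. This step is deterministic and holds simultaneously for all $\bm{\theta}$.

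\emph{Step 2 (generalization gap).} We bound $\mathcal{J}_S(\bm{\theta})-\hat{\mathcal{J}}_N(\bm{\theta})$ uniformly over $\bm{\Theta}$. We treat the $N$ instances as i.i.d. samples $\bm{c}^{(i)}\sim p_S$. Define
\begin{equation*}
\Phi(\hat{\mathcal{C}})=\sup_{\bm{\theta}\in\bm{\Theta}}\bigl(\mathcal{J}_S(\bm{\theta})-\hat{\mathcal{J}}_N(\bm{\theta})\bigr).
\end{equation*}
Since $|\mathcal{L}|\le B$, replacing a single $\bm{c}^{(i)}$ changes $\Phi$ by at most $2B/N$. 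McDiarmid's inequality then gives, with probability at least $1-\delta$,
\begin{equation*}
\Phi \le \mathbb{E}[\Phi] + 2B\sqrt{\ln(1/\delta)/(2N)}.
\end{equation*}
Next we bound $\mathbb{E}[\Phi]\le 2\mathfrak{R}_N(\mathcal{L}_{\bm{\Theta}})$ by the usual symmetrization argument. We introduce a ghost sample $\hat{\mathcal{C}}'\sim p_S^N$ and write $\mathcal{J}_S$ as the conditional expectation of the ghost empirical cost. Jensen's inequality moves the supremum inside. We then insert Rademacher signs, which is legitimate because swapping $\bm{c}^{(i)}$ and $\bm{c}'^{(i)}$ preserves the joint law. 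Finally we split the supremum of the sum into two suprema, each equal to $\mathfrak{R}_N$; for the negated term we use the symmetry of $\bm{\sigma}$. Combining Steps 1 and 2 yields the stated inequality for all $\bm{\theta}$ simultaneously on the same event.

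\emph{Main obstacle.} The delicate step is justifying that the $N$ DRIS trajectory costs behave as i.i.d. evaluations $\mathcal{L}(\bm{\theta},\bm{c}^{(i)})$ of a fixed function class. In DRIS the shared action $\pi_{\bm{\theta}}(\mathcal{S}_t)$ couples the instances, so $\mathcal{L}$ for instance $i$ in fact depends on the whole sample. We would handle this by taking, as in this subsection's definition of $\mathcal{L}(\bm{\theta},\bm{c})$, the trajectory cost of a single instance under the policy, which is well defined for each $\bm{c}$. Equivalently, we condition on the shared action sequence, as the robustness analysis above does; conditionally on it the rollouts are i.i.d. We state explicitly that the class $\mathcal{L}_{\bm{\Theta}}$ is indexed only by $\bm{\theta}$. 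Under this interpretation the bounded-differences and symmetrization arguments go through verbatim. We would also note that measurability of the supremum over $\bm{\Theta}$ is assumed, for example via separability of $\bm{\Theta}$.
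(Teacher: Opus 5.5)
Your proposal is correct and follows the paper's proof: a deterministic IPM bound for the transfer term, plus McDiarmid with $2B/N$ bounded differences and ghost-sample symmetrization for the estimation term. Keeping the signed deviation $\Phi$ throughout is actually more careful than the paper, whose claimed bound on $\sup_{\bm{\theta}}\lvert\mathcal{J}_S(\bm{\theta})-\hat{\mathcal{J}}_N(\bm{\theta})\rvert$ at level $\ln(1/\delta)$ is not covered by its one-sided McDiarmid step. However, drop the aside that conditioning on the shared action sequence is ``equivalent'': that sequence depends on $\bm{\theta}$ and on all the $\bm{c}^{(i)}$, so conditioning on it breaks both the i.i.d. structure and the $\bm{\theta}$-indexing of the class. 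The single-instance definition of $\mathcal{L}(\bm{\theta},\bm{c})$ that you primarily adopt is exactly what the paper's argument relies on.
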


\begin{proof}
Applying the triangle inequality, we decompose the sim-to-real error into two distinct components: 
an \textit{estimation error} (finite sampling) and a \textit{transfer error} (distribution shift):
\begin{equation}
\begin{aligned}
    \mathcal{J}_T\left(\bm{\theta} \right) &= \hat{\mathcal{J}}_N(\bm{\theta}) + \left(\mathcal{J}_S(\bm{\theta}) - \hat{\mathcal{J}}_N(\bm{\theta})\right) + \Big(\mathcal{J}_T(\bm{\theta}) - \mathcal{J}_S(\bm{\theta}) \Big) \\
    &\le \hat{\mathcal{J}}_N(\bm{\theta}) + \underbrace{\left|\mathcal{J}_S(\bm{\theta}) - \hat{\mathcal{J}}_N(\bm{\theta}) \right|}_{\text{Estimation Error (Step 1)}} + \underbrace{\Big| \mathcal{J}_T(\bm{\theta}) - \mathcal{J}_S(\bm{\theta}) \Big|}_{\text{Transfer Error (Step 2)}}
\end{aligned}
\end{equation}

\textbf{Step 1: Bounding the Estimation Error.}
We seek to bound $\sup_{\bm{\theta} \in \bm{\Theta}} \left| \mathcal{J}_S(\bm{\theta}) - \hat{\mathcal{J}}_N (\bm{\theta}) \right|$.
Let us define a deviation
\begin{equation}
    \Phi(\hat{\mathcal{C}}) = \sup_{\bm{\theta} \in \bm{\Theta}} \left( \mathcal{J}_S(\bm{\theta}) - \hat{\mathcal{J}}_N (\bm{\theta}) \right)
\end{equation}
Consider two sample sets $\hat{\mathcal{C}} = \{\bm{c}^{(1)}, \dots, \bm{c}^{(i)}, \dots, \bm{c}^{(N)}\}$ and $\hat{\mathcal{C}}' = \{\bm{c}^{(1)}, \dots, \Tilde{\bm{c}}^{(i)}, \dots, \bm{c}^{(N)}\}$ that differ by exactly one sample at index $i$.
The difference in the empirical cost is at most $2B / N$:
\begin{equation}
\begin{aligned}
    &\left| \hat{\mathcal{J}}_N(\bm{\theta}) - \hat{\mathcal{J}}_{N'}(\bm{\theta})\right| \\
    =& \left| \frac{1}{N}\sum_{j=1}^N \mathcal{L}\left(\bm{\theta}, \bm{c}^{(j)}\right) - \frac{1}{N}\left( \mathcal{L}\left(\bm{\theta}, \Tilde{\bm{c}}^{(i)}\right) + \sum_{j \neq i} \mathcal{L}\left(\bm{\theta}, \bm{c}^{(j)} \right) \right) \right| \\
    =& \frac{1}{N} \left| \mathcal{L}\left(\bm{\theta}, \bm{c}^{(i)} \right) - \mathcal{L}\left(\bm{\theta}, \Tilde{\bm{c}}^{(i)}\right) \right|\\
    \leq &\frac{1}{N} \left( \left| \mathcal{L}\left(\bm{\theta}, \bm{c}^{(i)} \right) \right| + \left| \mathcal{L}\left(\bm{\theta}, \Tilde{\bm{c}}^{(i)} \right) \right| \right) = \frac{2B}{N} 
\end{aligned}
\end{equation}
Because the supremum is a contraction, 
the function $\Phi(\hat{\mathcal{C}})$ satisfies the bounded difference condition (i.e., substituting the value of $\bm{c}^{(i)}$ changes the value of $\Phi(\hat{\mathcal{C}})$ by at most $2B/N$):
\begin{equation}
    \left|\Phi(\hat{\mathcal{C}}) - \Phi(\hat{\mathcal{C}}')\right| \le \sup_{\bm{\theta}} \left|\hat{\mathcal{J}}_N(\bm{\theta}) - \hat{\mathcal{J}}_{N'}(\bm{\theta}) \right| \le \frac{2B}{N}.
\end{equation}
We now apply McDiarmid's Inequality:
\begin{equation}
\begin{aligned}
    \mathbb{P}\left(\Phi(\hat{\mathcal{C}}\right) - \mathbb{E}[\Phi(\hat{\mathcal{C}})] \ge \epsilon)
    & \le \exp\left( \frac{-2\epsilon^2}{N (2B/N)^2} \right) \\
    & = \exp\left( \frac{-N\epsilon^2}{2B^2} \right)
\end{aligned}
\end{equation}
Setting the right-hand side to $\delta$ and solving for $\epsilon$, we obtain the concentration bound with:
\begin{equation}
    \epsilon = 2B\sqrt{\frac{\ln(1/\delta)}{2N}}.
\end{equation}
Finally, we bound the expectation $\mathbb{E}[\Phi(\hat{\mathcal{C}})]$ via standard symmetrization. 
Let $\hat{\mathcal{C}}_g$ be an independent ghost sample and $\sigma_i$ i.i.d. Rademacher variables. Then:
\begin{equation}
\begin{aligned}
    \mathbb{E}_{\hat{\mathcal{C}}}[\Phi(\hat{\mathcal{C}})] &= \mathbb{E}_{\hat{\mathcal{C}}} \left[ \sup_{\bm{\theta}} \mathbb{E}_{\hat{\mathcal{C}}_g} \left[\hat{\mathcal{J}}_{N_g}(\bm{\theta}) - \hat{\mathcal{J}}_N(\bm{\theta}) \right] \right] \\
    &\le \mathbb{E}_{\hat{\mathcal{C}}, \hat{\mathcal{C}}_g} \left[ \sup_{\bm{\theta}} \frac{1}{N} \sum_{i=1}^N \left(\mathcal{L}(\bm{\theta}, \bm{c}_g^{(i)}) - \mathcal{L}(\bm{\theta}, \bm{c}^{(i)}) \right) \right] \\
    &\le 2 \mathbb{E}_{\hat{\mathcal{C}}, \sigma} \left[ \sup_{\bm{\theta}} \frac{1}{N} \sum_{i=1}^N \sigma_i \, \mathcal{L}(\bm{\theta}, \bm{c}^{(i)}) \right] = 2\mathfrak{R}_N\left(\mathcal{L}_{\bm{\Theta}}\right)
\end{aligned}
\end{equation}
Combining the concentration and expectation bounds yields the result for Step 1:
\begin{equation}
\begin{aligned}
    \left|\mathcal{J}_S(\bm{\theta}) - \hat{\mathcal{J}}_N(\bm{\theta}) \right| &\leq \sup_{\bm{\theta} \in \bm{\Theta} } \left|\mathcal{J}_S(\bm{\theta}) - \hat{\mathcal{J}}_N(\bm{\theta}) \right| \\
    & \leq 2\mathfrak{R}_N\left(\mathcal{L}_{\bm{\Theta}}\right) + 2B\sqrt{\frac{\ln(1/\delta)}{2N}}
\end{aligned}
\end{equation}

\textbf{Step 2: Bounding the Transfer Error (Physics Mismatch).}
We must bound the discrepancy $\left|\mathcal{J}_T(\bm{\theta}) - \mathcal{J}_S(\bm{\theta})\right|$ caused by the shift from $p_S(\cdot)$ to $p_T(\cdot)$ by:
\begin{equation}
\begin{aligned}
     &\left| \mathcal{J}_T(\bm{\theta}) - \mathcal{J}_S (\bm{\theta}) \right| \\
\leq &\sup_{\bm{\theta} \in \bm{\Theta}}  \left| \mathcal{J}_T(\bm{\theta}) - \mathcal{J}_S (\bm{\theta}) \right| \\
    =&  \sup_{\bm{\theta} \in \bm{\Theta}} \left| \mathbb{E}_{\bm{c}\sim p_T(\bm{c})}[\mathcal{L}(\bm{\theta}, \bm{c})] - \mathbb{E}_{\bm{c}\sim p_S(\bm{c})}[\mathcal{L}(\bm{\theta}, \bm{c})] \right| \\
    =& \sup_{\bm{\theta} \in \bm{\Theta}}  \left| \int_{\mathcal{C}} \mathcal{L}(\bm{\theta}, \bm{c})\, p_T(\bm{c})\, d\bm{c} - \int_{\mathcal{C}} \mathcal{L}(\bm{\theta}, \bm{c})\,p_S(\bm{c})\, d\bm{c} \right|\\
    =& d_{\mathcal{L}_{\bm{\Theta}}} (p_S, p_T)
\end{aligned}
\end{equation}
where $d_{\mathcal{L}_{\bm{\Theta}}} (p_S, p_T)$ is the IPM defined in Eq.~\eqref{eq:ipm}, measuring the worst-case performance gap over the hypothesis (policy parameter) space $\bm{\Theta}$.
Unlike the estimation error, this does not vanish with $N \to \infty$. 

\textbf{Conclusion.}
Substituting the bounds from Step 1 and Step 2 back into the decomposition, we obtain:
\begin{equation}
    \mathcal{J}_T(\bm{\theta}) \le \hat{\mathcal{J}}_N(\bm{\theta}) + \left( 2\mathfrak{R}_N + 2B\sqrt{\frac{\ln(1/\delta)}{2N}} \right) + d_{\mathcal{L}_{\bm{\Theta}}}(p_S, p_T).
\end{equation}
\end{proof}

\textbf{Remark:}
Note that if the simulator cannot model the real world (e.g., $p_T$ is disjoint from $p_S$), 
the discrepancy term $d_{\mathcal{L}_{\bm{\Theta}}}$ is large and irreducible with $N$. 
However, compared to conventional DR ($N$=1), DRIS ($N \gg 1$) minimizes the first term (Generalization Gap), ensuring that the policy is at least robust to the \textit{modeled} uncertainty. 

\subsection{Discussion on Theory and Practical Implementation}
\label{appendix:f}

The theoretical analysis provided in this appendix examines the properties of the DRIS framework within an idealized setting, assuming access to the full ensemble state.
In our practical implementation, however, the policy operates on learned, compressed set representations generated by a trained encoder. 
While a formal equivalence is not proved here, 
the encoder is explicitly optimized to preserve task-relevant information from the underlying ball states. 
Consequently, we anticipate that the properties derived in this analysis remain positively correlated with the system's practical performance.
Extending this theoretical framework to encompass a broader range of learned representations is deferred to future work.